\theoremstyle{plain}
\newtheorem{theorem}{Theorem}[section]
\newtheorem{lemma}[theorem]{Lemma}
\newtheorem{claim}[theorem]{Claim}
\newtheorem{definition}[theorem]{Definition}
\newtheorem{remark}[theorem]{Remark}
\theoremstyle{definition}
\newtheorem{example}[theorem]{Example}
\newtheorem{observation}[theorem]{Observation}
\DeclareMathOperator*{\argmin}{argmin}
\DeclareMathOperator*{\argmax}{argmax}
\newcommand{\reals}{\mathbb{R}}
\newcommand{\br}{\text{BR}}
\author{Lee Cohen\thanks{Stanford. Email: \href{mailto:leeco@stanford.edu}{leeco@stanford.edu}} \and Saeed Sharifi-Malvajerdi\thanks{Toyota Technological Institute at Chicago (TTIC). Email: \href{mailto:vakilian@ttic.edu}{saeed@ttic.edu}} \and Kevin Stangl\thanks{Toyota Technological Institute at Chicago (TTIC). Email: \href{mailto:kevin@ttic.edu}{kevin@ttic.edu}} \and Ali Vakilian\thanks{Toyota Technological Institute at Chicago (TTIC). Email: \href{mailto:vakilian@ttic.edu}{vakilian@ttic.edu}} \and Juba Ziani \thanks{Georgia Institute of Technology. Email: \href{mailto:jziani3@gatech.edu}{jziani3@gatech.edu}}}
\date{}
\title{Bayesian Strategic Classification}
\begin{document}

\maketitle

\begin{abstract}
In strategic classification, agents modify their features, at a cost, to ideally obtain a positive classification outcome from the learner's classifier. The typical response of the learner is to carefully modify their classifier to be robust to such strategic behavior. When reasoning about agent manipulations, most papers that study strategic classification rely on the following strong assumption: agents \emph{fully} know the exact parameters of the deployed classifier by the learner. This often is an unrealistic assumption when using complex or proprietary machine learning techniques in real-world and high-stakes scoring and prediction tasks.

We initiate the study of \emph{partial} information release by the learner in strategic classification. We move away from the traditional assumption that agents have full knowledge of the classifier. Instead, we consider agents that have a common \emph{distributional prior} on which classifier the learner is using, and manipulate to maximize their expected utility according to this prior. The learner in our model can reveal truthful, yet not necessarily complete, information about the deployed classifier to the agents. The learner's goal is to release \emph{just enough} information about the classifier to maximize accuracy. We show how such partial information release can, counter-intuitively, benefit the learner's accuracy, despite increasing agents' abilities to manipulate. Releasing \emph{some} information about the classifier can help qualified agents pass the classifier, while not releasing enough information for unqualified agents to do so. 

We show that while it is intractable to compute the best response of an agent in the general case, there exist \emph{oracle-efficient} algorithms that can solve the best response of the agents when the learner's hypothesis class is the class of (low-dimensional) linear classifiers, or when the agents' cost function satisfies a natural notion of ``submodularity'' as we define. We then turn our attention to the learner's optimization problem and provide both positive and negative results on the algorithmic problem of how much information the learner should release about the classifier to maximize their expected accuracy, in settings where each agent's qualification level for a certain task can be captured using a real-valued number. 
\end{abstract}

\newpage
\section{Introduction}
Traditional machine learning critically relies on the assumption that the training data is representative of the unseen instances a learner faces at test time. Yet, in many real-life situations, this assumption fails. In particular, a critical failure point occurs when individuals (agents) manipulate decision-making algorithms for personal advantage, often at a cost. A typical example of such manipulations or \emph{strategic behavior} is seen in loan applications or credit scoring: for example, an individual may open new credit card accounts to lower their credit utilization and increase their credit score artificially. 
In the context of job interviews, a candidate can spend time and effort to memorize solutions to common interview questions and potentially look more qualified than they are at the time of an interview. A student might cram to pass an exam this way without actually understanding or improving their knowledge of the subject.

The prevalence of such behaviors has led to the rise of an area of research known as \emph{strategic classification}. 
Strategic classification, introduced by~\citet{hardt2016strategic}, aims to understand how a learner can optimally modify machine learning algorithms to be robust to such strategic manipulations of agents, if and when possible.

The strategic classification literature makes the assumption that the model deployed by the learner is \emph{fully observable} by the agents, granting them the ability to optimally best respond to the learner using resources such as effort, time, and money. Yet, this \emph{full information} assumption can be unrealistic in practice. There are several reasons for this: some machine learning models are proprietary and hide the details of the model to avoid leaking ``trade secrets'': e.g., this is the case for the credit scoring algorithms used by FICO, Experian, and Equifax.\footnote{``The exact algorithm used to condense your credit report into a FICO score is a closely guarded secret, but we have a general layout of how your credit score is calculated.'' Source: Business Insider, October 2023. [Link: \url{https://www.businessinsider.com/personal-finance/what-is-fico-score}].} Some classifiers are simply too complex in the first place to be understood and interpreted completely by a human being with limited computational power, such as deep learning models. Other classifiers and models may be obfuscated for data privacy reasons, which are becoming an increasingly major concern with new European consumer protection laws such as GDPR~\citep{GDPR} and with the October 2023 Executive Order on responsible AI~\citep{biden2023executive}.
In turn, there is a need to study strategic classification when agents only have \emph{partial} knowledge of the learner's model. 

There has been a relatively short line of work trying to understand the impact of incomplete information on strategic classification. \citet{altmicro} and \citet{bechavod2021gaming} focus on algorithms for deploying optimal classifiers in settings where agents can only gain partial or noisy information about the model.~\citet{haghtalab2023calibrated} study calibrated Stackelberg games, a more general form of strategic classification; in their framework, the learner engages in repeated interactions with agents who base their actions on calibrated forecasts about the learner's classifier. They characterize the optimal utility of the learner for such games under some regularity assumptions. While we also model agents with a distributional prior knowledge of the learner's actions,~\citet{haghtalab2023calibrated} focus on an online learning setting and the selection of a strategy for the learner without incorporating any form of voluntary \emph{information release} by the learner.

In contrast, we focus on this additional critical aspect of voluntary \emph{information release} by the learner 
that these works do not study. Namely, we ask: how can the learner complement agents' incomplete information by releasing \emph{partial and truthful} information about the classifier; and \emph{how much information} should a learner release to agents to maximize the accuracy of its model? 

This should give the reader pause: why should a learner release information about their deployed classifier since 
presumably such information only makes it easier for agents to manipulate their features and ``trick'' the learner. In fact,~\citet{stratindark} showed that information revelation can help---a learner may prefer to fully reveal their classifier as opposed to hiding it. While they consider either fully revealing the classifier or completely hiding it, our model considers a wider spectrum of information revelation that includes both ``full-information-release'' and ``no-information-release''. We show that there exist instances where it is optimal to reveal only \emph{partial} information about the classifier, in a model where a learner is \emph{allowed to reveal a subset of the classifiers containing the true deployed classifier.} For example, a tech firm might reveal to candidates that they will ask them about lists and binary trees during their job interviews. Lenders might reveal to clients that they do not consider factors like collateral. In the following, we summarize our contributions.

\paragraph{\textbf{Summary of contributions:}} 
\begin{itemize}
\item We propose a new model of interactions between strategic agents and a learner, under partial information. In particular, we introduce two novel modeling elements compared to the standard strategic classification literature: i) agents have partial knowledge about the learner's classifier in the form of a \emph{distributional prior} over the hypothesis class, and ii) the learner can \emph{release partial information} about their deployed classifier. Specifically, our model allows the learner to release a \emph{subset} of the hypothesis class to narrow down the agents' priors. 
Given our model, we consider a (Stackelberg) game between agents with partial knowledge and a learner that can release partial information about its deployed model. On the one hand, the agents aim to manipulate their features, at a cost, to increase their likelihood of receiving a positive classification outcome. On the other hand, the learner can release partial information to maximize the expected accuracy of its model, after agent manipulations. 
\item We study the agent's best response in our game.
We show that while in general, it is intractable to compute the best response of the agents in our model, there exist oracle-efficient algorithms\footnote{An oracle-efficient algorithm is one that calls a given oracle only polynomially many times.} that can \emph{exactly} solve the best response of the agents when the hypothesis class is the class of low-dimensional \emph{linear} classifiers.
\item We then move away from the linearity assumption on the hypothesis class and consider a natural condition on the agents' cost function for which we give an oracle-efficient \emph{approximation} algorithm for the best response of the agents for \emph{any} hypothesis class.
\item Finally, we study the learner's optimal information release problem. We consider screening settings where agents are represented to the learner by a real-valued number that measures their qualification level for a certain task. Prior work has focused on similar one-dimensional settings in the context of strategic classification; see, e.g., \citep{beyhaghi_et_al, randnoisestrat}. We first show that the learner's optimal information release problem is NP-hard when the agents' prior can be arbitrary.
\item In response to this hardness result, we focus on \emph{uniform} prior distributions and provide closed-form solutions for the case of \emph{continuous} uniform priors, and an efficient algorithm to compute the optimal information release for the case of \emph{discrete} uniform priors.
\item We finally consider alternative utility functions that are based on false positive (or negative) rates for the learner and provide insights as to what optimal information release should look under these utility functions, without restricting ourselves to uniform priors.
\end{itemize}

\paragraph{\textbf{Related Work.}}

Strategic classification  was first formalized by \citet{bruckner2011stackelberg, hardt2016strategic}.
\citet{hardt2016strategic} is perhaps the most seminal work in the area of strategic classification: they provide the first computationally efficient algorithms (under assumptions on the agents' cost function) to efficiently learn a near-optimal classifier in strategic settings. Importantly, this work makes the assumption that the agents fully know the exact parameters of the classifier due to existing ``information leakage'', even when the firm is obscuring their model.~\citet{hardt2016strategic} also do not consider a learner that can release partial information about their model. 

There have been many follow-up works in the space of strategic classification, including~\citep{kleinberg2020classifiers, randnoisestrat, miller2020strategic,strat1,strat3,strat4,strat5,strat6,strat8,strat9,strat10,strat11,strat12,strat13,strat14,strat15,strat16,stratperceptron,tang2021linear,hu2019disparate,socialcost18,performative2020,gameimprove,bechavod2021gaming,shavit2020causal,dong2018strategic,chen2020learning,harris2021stateful,shao2023strategic,ahmadi2023fundamental,zhang2021incentive,cohen2023sequential}. Each of these papers augments the original strategic classification work with additional and relevant considerations such as fairness, randomization, and repeated interaction.

Closest to our work,~\citet{altmicro},~\citet{stratindark},~\citet{bechavod2022information}, and~\citet{haghtalab2023calibrated} relax the full information assumption and characterize the impact of opacity on the utility of the learner and agents.~\citet{altmicro} are the first to introduce a model of ``biased'' information about the learner's classifier: instead of observing the learner's deployed classifier exactly, agents observe and best respond to a noisy version of this classifier; one that is randomly shifted (by an additive amount) from the true deployed classifier.
In contrast,~\citet{stratindark} and~\citet{bechavod2022information} consider models of \emph{partial} information on the classifiers, where agents can access samples in the form of historical (feature vector, learner's prediction) pairs. More precisely,~\citet{stratindark} study what they coin the ``price of opacity'' in strategic classification, defined as the difference in prediction error when not releasing vs fully releasing the classifier. They are the first to show that this price can be positive (in the context of strategic classification), meaning that a learner can reduce their prediction error by fully releasing their classifier in strategic settings.~\citet{bechavod2022information} consider a strategic regression setting in which the learner does not release their regression rule, but agents have access to (feature, score) samples as described above. They study how disparity in sample access (e.g., agents may only access samples from people similar to them) about the classifier across different groups induce unfairness in classification outcomes across these groups.~\citet{haghtalab2023calibrated} consider agents with (calibrated) forecasts over the actions of the learner, but do not consider the learner's information release which is our focus. Additionally, in our model, we do not constrain the agent's prior distribution to be calibrated. 

Beyond strategic classification, there are a few related lines of work where such partial information is considered. One is Bayesian Persuasion \citep{kamenica2011bayesian}; our work can be seen as similar to Bayesian Persuasion, but in a slightly different setting where i) the state of the world is a binary classifier and ii) the sender's signaling scheme \emph{must} incorporate an additional truthfulness constraint: namely, the sender's signal is restricted such that it \emph{cannot} rule out the true classifier (or, in Bayesian Persuasion terms, the state of the world). Another is algorithmic recourse; e.g.,~\citet{harris2022bayesian} focus on a similar ``intermediate'' information release problem as ours, but their approach relies on publishing a recommended (incentive-compatible) action or recourse for each agent to take, rather than a set of potential classifiers used by the learner. 
In our model, we release the same signal or information to all agents based on the underlying distribution over these agents' features.

\section{Model}\label{sec:model}
Our model consists of a population of \emph{agents} and a \emph{learner}. Each agent in our model is represented by a pair $(x,y)$ where $x \in \mathcal{X}$ is a feature vector, and $y \in \{0,1\}$ is a binary label. Throughout, we call an agent with $y=0$ a ``negative'', and an agent with $y=1$ a ``positive''. We assume there exists a mapping $f: \mathcal{X} \to \{0,1\}$ that governs the relationship between $x$ and $y$; i.e., $y=f(x)$ for every agent $(x,y)$. We will therefore use $x$ to denote agents from now on. 
We denote by $D$ the distribution over the space of agents $\mathcal{X}$. Similar to standard strategic settings, agent manipulations are characterized by a cost function $c: \mathcal{X} \times \mathcal{X} \to [0, \infty)$ where $c(x,x')$ denotes the cost that an agent incurs when changing their features from $x$ to $x'$. Let $\mathcal{H} \subseteq \{0,1\}^\mathcal{X}$ denote our hypothesis class, and let $h \in \mathcal{H}$ be the classifier that the learner is using for classification.

\paragraph{\textbf{A Partial Knowledge Model for the Agents.}} 
 We move away from the standard assumption in strategic classification that agents fully know $h$ and model agents as having a \emph{common} (shared by all agents) \emph{prior distribution} $\pi$ over $\mathcal{H}$. This distribution captures their \emph{initial} belief about which classifier is deployed by the learner. More formally, for every $h' \in \mathcal{H}$, $\pi(h')$ is the probability that the learner is going to deploy $h'$ for classification \emph{from the agents' perspective.} We emphasize that the learner is committed to using a fixed classifier $h$---$\pi$ only captures the agents' belief about the deployed classifier.

Throughout the paper, we assume that the deployed classifier $h$ is in the support of $\pi$. This assumption is consistent with how an agent would form their prior in real life: the learner (e.g., a hiring company) has been using a classifier $h$ (the hiring algorithm) to screen agents (applicants). The use of $h$ in the past informs the agents (future applicants) and shapes their prior belief $\pi$ about the classifier. Therefore, $h$ belonging to the support of $\pi$ is only a mild assumption, given that repeated use of $h$ in the past has led to the formation of $\pi$. 

\paragraph{\textbf{A Partial Information Release Model for the Learner.}} The learner has the ability to influence the agents' prior belief $\pi$ about the deployed classifier $h$ by releasing partial information about $h$. We model information release by releasing a subset $H \subseteq \mathcal{H}$ such that $h \in H$. We note that we reveal information truthfully, meaning that the deployed classifier is required to be in $H$. Note that this is a general form of information release because it allows the learner to release \emph{any} subset of the hypothesis class, so long as it includes the deployed classifier $h$.
Below, we provide natural examples of information release that can be captured by our model.

\begin{example}[Examples of Information Release via Subsets]
Consider the class of linear halfspaces in $d$ dimensions: $\mathcal{H} = \{ h_{w,b} : w = [w^1, w^2, \ldots, w^d]^\top \in \reals_+^d, \, b \in \reals \}$ where $h_{w,b}(x) \triangleq \mathds{1} [w^\top x + b \ge 0]$ and $x \in \mathcal{X} = \reals^d$ is the feature vector.  Let $h = h_{w_0, b_0}$ be the classifier deployed by the learner for some $w_0, b_0$. Under this setting, revealing the corresponding parameter of a feature, say $x^j$, in $h$ corresponds to releasing
$
H_1 = \{ h_{w,b} \in \mathcal{H}: w^j = w_0^j \}
$.
Let $(i_1, i_2, \ldots, i_k)$ denote the indices of the $k$ largest coordinates of $w_0$ of $h$. Revealing the top $k$ features of the classifier $h$ corresponds to releasing
$
H_2 = \{ h_{w,b} \in \mathcal{H}: w^{i_1}, w^{i_2} \ldots, w^{i_k} \text{ are the $k$ largest coordinates of } w \}
$.
Let $I_0$ be such that $w_0^i \neq 0$ iff $i \in I_0$. Revealing the relevant features of $h$, i.e. features with nonzero coefficients, corresponds to releasing
$
H_3 = \{ h_{w,b} \in \mathcal{H}: w^i \neq 0, \, \forall i \in I_0 \}
$.
Revealing features that the learner is \emph{not} using for classification can be cast similarly using a subset. This is a common form of information release in the real world with some companies publishing a disclaimer that they do not use, e.g., sensitive attributes like race or gender in their decisions.
\end{example}

\paragraph{\textbf{The Strategic Game with Partial Information Release.}} Once the partial information $H$ is released by the learner, agents best respond as follows: each agent first computes their \emph{posterior} belief about the deployed classifier 
by projecting their prior $\pi$ onto $H$, which we denote by $\pi \vert_H$, and is formally defined by
\[
\forall h' \in \mathcal{H}, \ \pi \vert_H (h') \triangleq \frac{\pi (h')}{\pi (H)} \mathds{1} [h' \in H] 
\]
Given this posterior distribution, the agent then moves to a new point that maximizes their utility. The utility is \emph{quasi-linear} and measured by the probability (according to $ \pi \vert_H$) of receiving a positive outcome minus the manipulation cost. Formally, the utility of agent $x$ that manipulates to $x'$, under the partial information $H$ released by the learner is given by
\begin{equation}
     u_x (x',H) \triangleq \Pr_{h' \sim \pi  \vert_H} \left[h'(x') = 1\right] - c(x, x')
\end{equation}
We let $\br (x,H)$ denote the best response of agent $x$, 
i.e. a point $x'$ that maximizes $u_x (x',H)$. \footnote{When there are several utility-maximizing solutions, we always break ties in favor of the lowest cost solution.}
The goal of the learner is to release $H$ that includes its deployed classifier $h$ so as to maximize its utility which is measured by its expected strategic accuracy.
\begin{equation}\label{eq:utilityPartial}
    U (H) \triangleq \Pr_{x \sim D} \left[ h(\br(x,H)) = f(x)\right]
\end{equation}

\begin{definition}[Strategic Game with Partial Information Release] 
\label{defn:partialgame}The game, between the learner and the agents, proceeds as follows:
    \begin{enumerate}
        \item The learner (knowing $f$, $D$, $c$, $\pi$) publishes a subset of hypotheses $H \subseteq \mathcal{H}$ such that $h \in H$.
        \item Every agent $x$ best responds by moving to a point $\br(x, H)$ that maximizes their utility.
        \begin{equation*}
         \br(x, H) \in \argmax_{x' \in \mathcal{X}} u_x (x',H)
        \end{equation*}
        \end{enumerate}
        The learner's goal is to find a subset $H^\star \subseteq \mathcal{H}$ with $h \in H^\star$, that maximizes its utility\footnote{We emphasize that here $h$ is fixed -- namely, $H$ is the only variable in the optimization problem of the learner which is constrained to include $h$.}:
        \begin{equation*}
        H^\star \in \argmax_{H \subseteq \mathcal{H}, \, h \in H} U (H)
        \end{equation*}
\end{definition}

We note that similar to standard strategic classification, the game defined in Definition~\ref{defn:partialgame} can be seen as a \emph{Stackelberg} game in which the learner, as the ``leader'', commits to her strategy first and then the agents, as the ``followers'', respond. The optimal strategy of the learner, $H^\star$, corresponds to the \emph{Stackelberg equilibrium} of the game, assuming best response of the agents.

\paragraph{\textbf{Contrasting with the Standard Setting of Strategic Classification.}} 
The game we define in Definition~\ref{defn:partialgame} not only captures both the partial knowledge of the agents and the partial information release by the learner but can also be viewed as a \emph{generalization} of the standard strategic classification game where the agents fully observe the classifier $h$, which we refer to as the \emph{full information release} game (e.g., see \citep{hardt2016strategic}). This is because the learner can choose $H = \{ h \}$ in or model. 
But can partial information release lead to significant improvement in the utility of the learner, compared to full information release?

Observe that by definition, $U(H^\star) \ge U ( \{ h \})$, i.e., the learner can only gain 
utility when they optimally release partial information instead of fully revealing the classifier. In the following examples, we show that there exist instantiations of the problem where $U (H^\star) > U ( \{ h\})$, even when $h$ is picked to be the optimal classifier in the full information release game, i.e., one that maximizes $U ( \{ h\})$. In other words, we show that the learner can gain \emph{nonzero} utility by releasing a subset that is not $\{ h \}$, \emph{even 
if the choice of $h$ is optimized for the full information release game}. 

\begin{example}[Partial vs. Full Information Release]\label{ex:1}
    Suppose $\mathcal{X} = \{x_1, x_2\}$, and that their probability weights under the distribution\footnote{The claim holds for any distribution $D$ in which both $x_1$ and $x_2$ are in the support.} are given by
    $
    D(x_1) = 2/3, D(x_2) = 1/3
    $,
    and their true labels are given by $f(x_1) = 1$, $f(x_2) = 0$.    
    Suppose the cost function is given as follows:
    $
    c(x_1, x_2) = 2, c(x_2, x_1)= 3/4
    $.
    Let $\mathcal{H} = \{ h_1, h_2, h_3\}$ be given by table~\ref{tab:hypothesis}.
    \begin{table}[t]
        \centering
        \begin{tabular}{c|c|c|c}
              & $h_1 (=f)$ & $h_2$ & $h_3$ \\
              \hline
             $x_1$ & 1 & 0 & 0 \\
             $x_2$ & 0 & 1 & 0
        \end{tabular}
        \caption{Hypothesis class $\mathcal{H}$ in Example~\ref{ex:1}}
        \label{tab:hypothesis}
    \end{table}
    One can show that under this setting, $h = h_1$ is the optimal classifier under full information release, i.e., it optimizes $U (\{h\})$, and that for such $h$, $U ( \{ h \}) = 2/3$. However, suppose the prior distribution over $\mathcal{H}$ is uniform. 
    One can show that under this setting, and when $h =  h_1$ is the deployed classifier, releasing $H^\star = \{h_1, h_2\}$ implies $U (H^\star) = 1 > U ( \{ h \}) = 2/3$. In other words, the learner can exploit the agent's prior by releasing information in a way that increases its own utility by a significant amount.
\end{example}

In the next example, we consider the more natural setting of single-sided threshold functions in one dimension and show that the same phenomenon occurs: the optimal utility achieved by partial information release is strictly larger than the utility achieved by the full information release of $h$, \emph{even after the choice of $h$ is optimized for full information release}.

\begin{example}[Partial vs.~Full Information Release]\label{ex:partailCanbeBetter2}
Suppose $\mathcal{X} = [0,2]$, $D$ is the uniform distribution over $[0,2]$, $f(x) = \mathds{1} \left[ x \ge 1.9 \right]$, $\mathcal{H} = \{ h_t : t \in [0,2] \}$ where $h_t(x) \triangleq \mathds{1} \left[ x \ge t \right]$. Suppose the cost function is given by the distance $c(x,x') = |x - x'|$. We have that under this setting, the optimal classifier in $\mathcal{H}$ under full information release is $h = h_2$, and that its corresponding utility is
$
U (\{ h \}) = 1 - \Pr_{x \sim Unif[0,2]} \left[ 1 \le x < 1.9 \right] = 0.55
$.
Now suppose the agents have the following prior over $\mathcal{H}$:
$
\pi (h') = 0.1 \cdot \mathds{1} [h' = h_2] + 0.9 \cdot \mathds{1} [h' = h_{1.8}]
$.
Under this setting, and when $h= h_2$ is deployed for classification, one can see that releasing $H^\star = \{ h_2, h_{1.8}\}$ leads to perfect utility for the learner. We therefore have
$
U (H^\star) = 1 > U (\{ h \}) = 0.55.
$.
\end{example}


\section{The Agents' Best Response Problem}
\label{sec:agentbr}
In this section we focus on the best response problem faced by the agents in our model, as described in Definition~\ref{defn:partialgame}. In particular, we consider a natural optimization oracle for the cost function of the agents that can solve simple projections. We will formally define this oracle later on. Given access to such an oracle, we then study the \emph{oracle complexity}\footnote{The number of times the oracle is called by an algorithm.} of the agent's best response problem. First, we show in subsection~\ref{subsec:br-hardness} that the best response problem is computationally hard even when we restrict ourselves to a common family of cost functions. In subsection~\ref{subsec:linear}, we provide an \emph{oracle-efficient} algorithm\footnote{An algorithm that calls the oracle only polynomially many times.} for solving the best response problem when the hypothesis class is the class of low-dimensional linear classifiers. In subsection~\ref{subsec:submodular}, we introduce the notion of \emph{submodular cost functions} and show that for any hypothesis class, there exists an oracle-efficient algorithm that \emph{approximately} solves the best response problem when the cost function is submodular.

Recall that the agents' best response problem can be cast as the following: given an agent $x \in \mathcal{X}$, and a distribution $P$ (e.g., $P=\pi \vert_H$ where $\pi$ is the prior and $H$ is the released information) over a set $\{h_1, \ldots, h_n\} \subseteq \mathcal{H}$, we want to solve
\[
\argmax_{z \in \mathcal{X}} \left\{ \Pr_{h' \sim P} \left[h' (z) = 1\right] - c(x, z) \right\}
\]
We consider an oracle that given any region $R \subseteq \mathcal{X}$, which is specified by the intersection of positive (or negative) regions of $h_i$'s, returns the projection of the agent $x$ onto $R$ according to the cost function $c$: $\argmin_{z \in R} c(x,z)$. For example, when $\mathcal{H}$ is the class of linear classifiers and $c (x,z) = \Vert x - z \Vert_2$, the oracle can compute the $\ell_2$-projection of the agent $x$ onto the intersection of any subset of the linear halfspaces in $\{h_1, \ldots, h_n\}$. We denote this oracle by $\text{Oracle} (c, \mathcal{H})$ and formally define it in Algorithm~\ref{alg:oracle}.

\begin{algorithm}[t]
	\SetAlgoNoLine
  	 \KwIn{ agent $x$, region $R = R^+ \cap R^-$ specified as,
    	$
    	R^+ = \cap_{ \ i \in I^+} \left\{ z: h_i (z) = 1 \right\}$ and  $R^- = \cap_{ \ i \in I^-} \left\{ z: h_i (z) = 0 \right\}$ for some $I^+$ and $I^{-}$.}
    	\KwOut{
    	$
    	\argmin_{z \in R} c(x,z)
    	$}
	\caption{Oracle($c, \mathcal{H}$)}
	\label{alg:oracle}
\end{algorithm}

Having access to such an oracle, and without further assumptions, the best response problem can be solved by exhaustively searching over all subsets of $\{h_1, \ldots, h_n\}$ because:
\begin{equation}\label{eq:br}
    \max_{z \in \mathcal{X}} \left\{ \Pr_{h' \sim P} \left[h'(z) = 1\right] - c(x, z) \right\} = \max_{S \subseteq \{h_1, \ldots, h_n\}} \left\{ \sum_{h' \in S } P (h') - \min_{z: h'(z) = 1, \, \forall h' \in S} c(x,z) \right\}
\end{equation}
This algorithm is inefficient because it makes exponentially ($2^n$) many oracle calls. In what follows, we consider natural instantiations of our model and examine if we can get algorithms that make only $poly(n)$ oracle calls.

\subsection{Computational Hardness for $p$-Norm Cost Functions}\label{subsec:br-hardness}
In this section, we consider Euclidean spaces and the common family of $p$-norm functions for $p\ge 1$ and show that even under the assumption that the cost function of the agent belongs to this family, the problem of finding the best response is computationally hard.
Formally, a $p$-norm cost function is defined by: for every $x,x'\in \reals^d$, $c_p(x, x') = \|x - x'\|_p$ where $p\ge 1$.

\begin{theorem}[Computational Hardness with Oracle Access]\label{thm:bestresponsehardness}
     $\Omega(2^n / \sqrt{n})$ calls to the oracle (Algorithm~\ref{alg:oracle}) are required to compute the best response of an agent with a $2/3$ probability of success, even when $\mathcal{X} = \mathbb{R}^2$ and the cost function is $c_p$ for some $p\ge 1$.
\end{theorem}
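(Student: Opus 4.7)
The plan is to apply Yao's minimax principle and exhibit a hard distribution over hypothesis configurations under which any deterministic algorithm using $o(\binom{n}{n/2}) = o(2^n/\sqrt{n})$ oracle calls succeeds with probability strictly below $2/3$. The distribution is uniform over configurations indexed by $T^* \in \binom{[n]}{n/2}$. In configuration $T^*$, I would place (i) a \emph{target} point $z_{T^*}^*$ at $p$-distance $d^* := 1/2 - 1/n$ from $x$ with sign pattern exactly $T^*$, i.e., $h_i(z_{T^*}^*) = \mathds{1}[i \in T^*]$; (ii) for every $S \in \binom{[n]}{n/2-1}$, a \emph{decoy} point $z_0^S$ at a strictly smaller distance $d_0 \in (d^* - 1/n, \, d^*)$ with sign pattern exactly $S$, placed at a fixed, $T^*$-independent location on the $p$-sphere of radius $d_0$ (feasible because $\reals^2$ affords uncountably many such points, with distinct decoy locations); and $h_i(z) = 0$ for all $i$ at every other point, so that $x$ has sign $\emptyset$. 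The prior $P$ is uniform over $\{h_1, \dots, h_n\}$. A direct utility check shows that in configuration $T^*$ the unique best response is $z_{T^*}^*$ (utility $1/n$), strictly beating every decoy (utility $1/(2n)$) and every sign-$\emptyset$ point (utility $\leq 0$).

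The core of the argument is a case analysis of the oracle's response to a query $(I^+, I^-)$. Queries with $I^+ \cap I^- \neq \emptyset$ have $R = \emptyset$, and queries with $I^+ = \emptyset$ have $x \in R$; in both cases the answer does not depend on $T^*$. When $1 \leq |I^+| \leq n/2 - 1$ and $|I^-| \leq n/2 + 1$, one checks that some $S \in \binom{[n]}{n/2-1}$ satisfies $I^+ \subseteq S$ and $S \cap I^- = \emptyset$, so the corresponding decoy lies in $R$; since every decoy is strictly closer to $x$ than any target, the oracle returns a decoy whose identity is a fixed function of $(I^+, I^-)$ alone. When $|I^+| > n/2$ or $|I^-| > n/2 + 1$, neither a target nor a decoy is compatible, so $R = \emptyset$ and the oracle returns $\bot$. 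The only informative case is $|I^+| = n/2$ with $I^+ \cap I^- = \emptyset$: no decoy fits since $|S| < |I^+|$, and $z_{T^*}^* \in R$ iff $I^+ = T^*$, so the oracle returns $z_{T^*}^*$ (revealing $T^*$ through its distinct location) when $I^+ = T^*$ and $\bot$ otherwise. Thus each informative query tests whether $T^*$ equals exactly one specific subset in $\binom{[n]}{n/2}$.

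A short counting argument concludes the proof. Under the uniform distribution on $T^*$, after $q$ informative queries at distinct subsets $I_1^+, \dots, I_q^+$, the probability of hitting $T^*$ is $q/\binom{n}{n/2}$, and on the complementary event any deterministic guess succeeds with probability at most $1/(\binom{n}{n/2}-q)$. The total success probability therefore simplifies to $(q+1)/\binom{n}{n/2}$, which reaches $2/3$ only when $q = \Omega(\binom{n}{n/2}) = \Omega(2^n/\sqrt{n})$, yielding the claim.

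I expect the main obstacle to be the oracle case analysis---specifically, ensuring that for every query with $|I^+| \leq n/2 - 1$ some compatible decoy lies in $R$ \emph{and} is strictly closer to $x$ than any target, so that the oracle's output is a deterministic function of $(I^+, I^-)$ alone and leaks no information about $T^*$. Once this shadowing property of the decoys is verified, the remainder of the argument is combinatorial bookkeeping.
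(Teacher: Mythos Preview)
Your proposal is correct and follows essentially the same approach as the paper: both arguments hide a uniformly random size-$n/2$ subset $T^*$, place ``decoy'' points for every size-$(n/2-1)$ subset so that all queries with $|I^+|<n/2$ are shadowed by a $T^*$-independent answer, and conclude via Yao's principle that only queries with $|I^+|=n/2$ are informative and act as equality tests against $T^*$. The minor technical differences---you place decoys strictly closer than the target (avoiding the paper's tie at distance $1/2-\epsilon$), you omit the paper's ``far'' points $x_{T,f}$ (so non-matching $|I^+|=n/2$ queries return $\bot$ rather than a far point), and you carry out the $I^-$ case analysis explicitly rather than leaving it implicit in the oracle simulation---do not change the essence of the argument.
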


\begin{proof}[Proof of Theorem~\ref{thm:bestresponsehardness}]
To prove the claim, we reduce the following hidden-set detection problem with \textsc{EqualTo($\cdot$)} oracle to our best response problem. In hidden-set detection problem, given two players, Alice and Bob, with Bob possessing a `hidden' subset $S^\star \subseteq [n]$ of size $n/2$, Alice aims to identify Bob's set $S^\star$ using the minimum number of queries to Bob. She has query access to \textsc{EqualTo($T$)} oracle that checks whether her set $T \subset [n]$ matches Bob's set $(S^\star)$. It is trivial that any randomized algorithm for the hidden-set detection problem with success probability of at least $1 - O(1)$ requires ${n \choose n/2}$ queries in the worst-case scenario. This is via a straightforward application of Yao's Min-Max principle~\cite{yao1977probabilistic}: consider a uniform distribution over all subsets of size $n/2$ from $[n]$, as the Bob's set. Then, after querying half of the subsets of size $n/2$, the failure probability of Alice in detecting Bob's set is at least $(1-1/n)(1-1/(n-1))\cdots (1 - 1/(n/2)) > (1 - 2/n)^{n/2} >  e^{-1}(1-2/n) > 1/3$ for sufficiently large values of $n$.

Next, corresponding to an instance of the hidden-set detection problem with $S^\star$, we create an instance of the agents' best response problem and show that any algorithm that computes the best response with success probability at least $2/3$ using $N$ oracle calls (Algorithm~\ref{alg:oracle}), detects the hidden set of the given instance of the hidden-set detection problem using at most $N$ calls of \textsc{EqualTo($\cdot$)} with probability at least $2/3$. Hence, computing the best response problem with success probability at least $2/3$ requires ${n \choose n/2} = \Omega(2^{n}/\sqrt{n})$ oracle calls.

Let $n = 2k$ and $\epsilon < 1/n$. Corresponding to every subset $S\subset [n]$ of size $n/2 - 1$, there is a distinct point $x_S$ at distance $1/2 - \epsilon$ from the origin, i.e.,  $\|x_S\|_p = 1/2 -\epsilon$. Corresponding to every subset $S\subset [n]$ of size $n/2$, there are two distinct points $x_{S,n}$ and $x_{S,f}$ at distances respectively $1/2 -\epsilon$ (near) and $1/2 + \epsilon$ (far) from the origin, i.e.,  $\|x_{S,n}\|_p = 1/2 - \epsilon$ and $\|x_{S,f}\|_p = 1/2 + \epsilon$.  

Now, we are ready to describe the instance $I_{S^\star}$ of our best response problem corresponding to the given hidden-set detection problem with $S^\star$.  
We define $\mathcal{H} = \{h_1, \cdots, h_n\}$ and distribution $P$ over $\mathcal{H}$ such that,
\begin{itemize}
    \item $P$ is a uniform distribution over all classifiers $\mathcal{H}$, i.e.,  $P(h_i) = 1/n$ for every $i\in [n]$. 
    \item For every subset $T \subset [n]$ of size $n/2 -1$, 
    we define $h_i(x_T) = \mathds{1}[i \in T]$.

    \item For every subset $T \subset [n]$ of size $n/2$, we define $h_i(x_{T,f}) = \mathds{1}[i \in T]$. Moreover, if $T\neq S^\star$, then $h_i(x_{T,n}) = 0$ for all $i\in [n]$. Otherwise, if $T = S^\star$, 
    we define $h_i(x_{T,n}) = \mathds{1}[i \in T]$.

    \item Finally, for the remaining points in $\mathcal{X}$, i.e., $x' \in \mathbb{R}^2\setminus(\{x_T: T\subset[n] \text{ and } |T| = n/2-1\} \cup \{x_{T,n}, x_{T, f}: T\subset[n] \text{ and } |T| = n/2\})$, we define $h_i(x') =0$ for all $i\in [n]$. In other words, points that do not correspond to subsets of size $n/2-1$ or $n/2$ are classified as negative examples by every classifier in $\mathcal{H}$.
\end{itemize}
In the constructed instance $I_{S^\star}$, no point is classified as positive by more than $n/2$ classifiers in $\mathcal{H}$, and the $p$-norm distance from the origin for all points classified as positive by a subset of classifiers is at least $1/2-\epsilon$. Therefore, the best response for an agent located at the origin of the space is $x_{S^\star,n}$, yielding a utility of $1/2 - (1/2 - \epsilon) = \epsilon > 0$. Hence, the computational task in computing the best response involves identifying the (hidden) subset $S^\star$. Refer to Figure~\ref{fig:instance} for a description of $I_{S^*}$.

\begin{figure}
    \centering
    \begin{tikzpicture}
        \node[circle,fill,inner sep=1.5pt] (origin) at (0,0) {};
    
        \draw (origin) circle (2cm);
    
        \foreach \angle in {0,45,...,315} {
            \node[circle,fill=blue,inner sep=1pt] at (\angle:1.9cm) {};
        }
        \foreach \angle in {0,30,...,300} {
            \node[circle,fill=red,inner sep=1pt] at (\angle:2.1cm) {};
        }
        \node[circle,fill=red,inner sep=1.2pt, label=above:{$S^*$}] at (330:1.9cm) {};
    \end{tikzpicture}
    \caption{In this example, we consider $p=2$, i.e., $c(x,x') = \|x, x'\|_2$. The agent is located at the origin. Blue nodes correspond to a point in the intersection of the positive regions of subsets of classifiers of size $\frac{n}{2} - 1$, each located at a Euclidean distance of $1/2 - \epsilon$ from the origin, where $\epsilon$ is a small positive value. Moreover, points in the intersection of the positive regions of subsets classifiers of size $\frac{n}{2}$ are indicated by red points, all except the one corresponding to $S^\star$ are located at a Euclidean distance of $1/2 + \epsilon$ from the origin. The red point corresponding to $S^\star$ is uniquely placed at a distance of $1/2 - \epsilon$ from the origin, similar to the blue nodes. Furthermore, all points, corresponding to different subsets, are located at distinct locations in the space.}
    \label{fig:instance}
\end{figure}
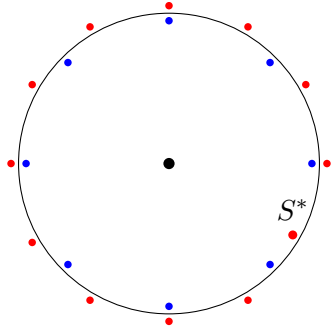

Although we described the construction of $I_{S^\star}$, what we need to show to get the exponential lower bound on the oracle complexity of the best response problem is constructing an oracle (i.e., an implementation of Algorithm~\ref{alg:oracle}), using the $\textsc{EqualTo}(\cdot)$ oracle, consistent with $I_{S^\star}$. To do so, given a subset of classifiers specified by $T \subset [n]$, the oracle returns as follows:
\begin{itemize}
    \item {\bf if $|T| > n/2$}: It returns an empty set.
    \item {\bf if $|T| < n/2$}: It returns $x_{T'}$ for an arbitrary set $T' \supseteq T$ of size $n/2-1$. Note that $\|x_{T'}\|_p = 1/2 - \epsilon$.
    \item {\bf if $|T| = n/2$ and $\textsc{EqualTo} (T) = \textsc{False}$}: It returns $x_{T, f}$. Note that $\|x_{T,f}\|_p = 1/2 + \epsilon$.
    \item {\bf if $|T| = n/2$ and $\textsc{EqualTo} (T) = \textsc{True}$}: It returns $x_{T, n}$. Note that $\|x_{T,n}\|_p = 1/2 - \epsilon$.   
\end{itemize}
\end{proof}
\begin{remark}
As in each instance $I_{S^\star}$ the only point with strictly positive utility is $x_{S^*,n}$, our proof for Theorem~\ref{thm:bestresponsehardness} essentially rules out the existence of any approximation algorithm for the best response problem with success probability at least $2/3$ using $o(2^n/\sqrt{n})$. 
\end{remark}

\subsection{An Oracle-Efficient Algorithm for Low-Dimensional Linear Classifiers}\label{subsec:linear}

In this section we show that when $\mathcal{X} = \reals^d$ for some $d$,
and when $\mathcal{H}$ contains only linear classifiers, i.e., every $h \in \mathcal{H}$ can be written as $h(x) = \mathds{1} \left[ w^\top x + b \ge 0 \right]$ for some $w \in  \reals^d$ and $b \in \reals$, then the best response of the agents can be computed with $O(n^d)$ oracle calls when $d \ll n$. 

The algorithm, which is described in Algorithm~\ref{alg:BR_linear}, first computes the partitioning ($R_n$) of the space $\mathcal{X}$ given by the $n$ linear classifiers. For any element of the partition in $R_n$, it then solves the best response when we restrict the space to that particular element. This gives us a set of points, one for each element of the partition. The algorithm finally outputs the point that has maximum utility for the agent. This point, by construction of the algorithm, is the best response of the agent. The oracle-efficiency of the algorithm follows from the observation that $n$ linear halfspaces in $d$ dimensions partition the space into at most $O (n^d)$ regions when $d \ll n$. The formal guarantee of the algorithm is given in Theorem~\ref{thm:linear}.

\begin{algorithm}[t]
\SetAlgoNoLine
    \KwIn{agent $x$, cost function $c$, arbitrary distribution $P$ over linear classifiers $\{h_1, \ldots, h_n\}$}
    \textbf{Step 1. Compute the partitioning ($R_n$) of the space given by $\{h_1, \ldots, h_n\}$}\;
    Initialize $R_1 \gets \left\{ \{z: h_1 (z) = 1 \}, \{z: h_1 (z) = 0 \} \right\}$\;
    \For{$i=2, \ldots, n$}{
        $R_i \gets R_{i-1}$\;
        \For{$R \in R_{i-1}$}{
            \If{$\{z: h_{i}(z) = 0 \} \cap R \neq \emptyset$ }{
                $R_i \gets R_i \setminus R$ \tcp*{Remove $R$}
                $R_i \gets R_i \cup \left\{ \{ z: h_i (z) = 1 \} \cap R, \{ z: h_i (z) = 0 \} \cap R \right\}$ \tcp*{Split $R$ into two regions}
            }
        }
    }
    \textbf{Step 2. Given $R_n$, compute the best response}\;
    \For{$R \in R_n$}{
        Let $R = R^{+} \cap R^{-}$ where
        $
        R^+ = \cap_{ \ i \in I^+} \left\{ z: h_i (z) = 1 \right\}$ and $R^- = \cap_{ \ i \in I^-} \left\{ z: h_i (z) = 0 \right\}
        $\;
        Call the oracle (Algorithm~\ref{alg:oracle}) to solve
        $
        z _R \in \argmin_{z \in R}  c(x, z) 
        $\;
        Compute the utility of $z_R$:
        $
        \text{utility} (z_R) = \sum_{i \in I^+} P (h_i) - c(x,z_R)
        $\;
    }
    \KwOut{$\argmax_{z \in Z} \text{utility} (z)$ where $Z = \{ z_R: R \in R_n \}$}
\caption{Best Response of Agents in the Linear Case}
\label{alg:BR_linear}
\end{algorithm}

\begin{theorem}\label{thm:linear}
    Suppose $\mathcal{X} = \reals^d$ for some $d \ll n $, and $\mathcal{H}$ contains only linear classifiers. Then for any agent $x$, any cost function $c$, and any distribution $P$ over $\{h_1, \ldots, h_n\} \subseteq \mathcal{H}$, Algorithm~\ref{alg:BR_linear} returns the best response of the agent in time $O(n^{d+1})$, while making $O(n^d)$ calls to the oracle (Algorithm~\ref{alg:oracle}).
\end{theorem}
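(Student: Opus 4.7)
The plan is to argue correctness and the oracle/time bounds separately, each of which follows from a standard observation about arrangements of hyperplanes.

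\textbf{Correctness.} I would first establish that the best response must lie in some cell of the arrangement $R_n$ produced in Step~1. By induction on $i$, the collection $R_i$ is exactly the set of non-empty cells in the arrangement of the first $i$ hyperplanes: at each step we remove any cell that is split by $h_i$ and replace it by its two (nonempty) sub-cells, and this is precisely the recursive construction of the arrangement. Thus $R_n$ forms a partition of $\mathcal{X}=\reals^d$, and every point $z \in \mathcal{X}$ lies in some cell $R \in R_n$. Within a fixed cell $R = R^+ \cap R^-$, the vector $(h_1(z), \ldots, h_n(z))$ is constant, so for any $z \in R$,
\[
\Pr_{h' \sim P}[h'(z) = 1] \;=\; \sum_{i \in I^+} P(h_i),
\]
a constant depending only on $R$. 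Consequently, maximizing the agent's utility $u_x(z, H)$ over $z\in R$ reduces to minimizing $c(x, z)$ over $z \in R$, which is exactly the projection returned by $\text{Oracle}(c, \mathcal{H})$ in Step~2. Taking the maximum over the finitely many cells, the algorithm returns a point that attains the global maximum of $u_x(\cdot, H)$ and hence is a best response.

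\textbf{Oracle and time complexity.} The key quantitative fact I would invoke is the classical bound from the theory of hyperplane arrangements: $n$ hyperplanes in $\reals^d$ partition the space into at most
\[
\sum_{i=0}^{d} \binom{n}{i} \;=\; O(n^d)
\]
full-dimensional cells (in general position this is tight, and in degenerate cases it only decreases). Therefore $|R_n| = O(n^d)$, and Step~2 calls the oracle once per cell, giving $O(n^d)$ oracle calls. For the time bound, the loop in Step~1 processes $h_i$ by scanning $R_{i-1}$ and performing a constant-cost check per existing cell, so the total work across all $n$ iterations is proportional to $\sum_{i=1}^{n} |R_{i-1}| = O(n \cdot n^d) = O(n^{d+1})$, which dominates the $O(n^d)$ work of Step~2.

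\textbf{Main obstacle.} The only nontrivial ingredient is the $O(n^d)$ arrangement bound; everything else reduces to bookkeeping. A subtle point I would spell out carefully is the emptiness test ``$\{z : h_i(z) = 0\} \cap R \neq \emptyset$'' that decides whether a hyperplane splits a cell. Since each cell $R$ is itself defined as an intersection of halfspaces from $\{h_1, \ldots, h_{i-1}\}$, this test is a linear-feasibility check that can be performed by one additional oracle call (or in polynomial time via linear programming). I would note that folding this into the counting argument only affects constants and does not change the stated $O(n^{d+1})$ runtime or $O(n^d)$ oracle complexity.
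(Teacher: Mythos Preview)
Your proposal is correct and follows essentially the same approach as the paper: both argue correctness from the fact that the classifier labels are constant on each cell, invoke the hyperplane-arrangement bound $|R_n|\le\sum_{j=0}^d\binom{n}{j}=O(n^d)$ (the paper re-derives it via the recursion $R(n,d)\le R(n-1,d)+R(n-1,d-1)$, while you cite it directly), and obtain the runtime as $\sum_i |R_i|=O(n^{d+1})$. One small slip in your final paragraph: if the emptiness test in Step~1 were implemented via an oracle call, that would cost $\sum_i |R_{i-1}|=O(n^{d+1})$ oracle calls, not $O(n^d)$, so only the LP alternative preserves the stated oracle complexity.
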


\begin{proof}[Proof of Theorem~\ref{thm:linear}]
     The fact that Algorithm~\ref{alg:BR_linear} returns the best response follows from the construction of the algorithm. We first prove the oracle and the runtime complexity for $d=2$ and then generalize it to any $d$. The oracle complexity of Algorithm~\ref{alg:BR_linear} is $|R_n|$. Note that $|R_1| = 2$, and for any $n \ge 2$, $|R_n| \le |R_{n-1}| + n$.   
    This is because the line $\{z: h_n(z) = 0\}$ intersects the lines formed by $\{h_1, \ldots, h_{n-1}\}$ in at most $n-1$ points, which will then partition $\{z: h_n(z) = 0\}$ into at most $n$ segments. Each segment of the new line then splits a region in $R_{n-1}$ into two regions. So, there are at most $n$ new regions when $h_n$ is introduced. The recursive relationship implies that
    $
    |R_n| \le 1 + \frac{n(n+1)}{2} = O(n^2)
    $.
    The runtime complexity of the algorithm is then given by
    $
    O\left(\sum_{i=1}^n |R_i| \right) = O(n^3)
    $.
    
    Now consider any dimension $d$ and let $R(n,d)$ denote the number of partitions induced by the classifiers $\{h_1, \ldots, h_n\}$. Note that in this case we have $R(n,d) \le R(n-1,d) + R(n-1,d-1)$.
    The first term on the right hand side is the number of regions induced by $\{h_1, \ldots, h_{n-1}\}$ and the second term is the number of splits (dividing a region into two) when $h_n$ is introduced. Note that $\{z: h_n(z) = 0\}$ is a $d-1$-dimensional hyperplane and the number of splits induced by $h_n$ is simply the number of regions induced by $\{h_1, \ldots, h_{n-1}\}$ on $\{z: h_n(z) = 0\}$, which is $R(n-1,d-1)$.
    The recursive relationship implies that
    $
    |R_n| = R(n,d) \le \sum_{j=0}^d {n \choose j} = O (n^d)
    $.
\end{proof}

\subsection{An Oracle-Efficient Approximation Algorithm for $V$-Submodular Costs}\label{subsec:submodular}

In this section we give a sufficient condition on the cost function under which we can give an approximation algorithm for the best response of the agents. In particular, for a given collection of classifiers $V \subseteq \mathcal H$, we introduce the notion of \emph{$V$-submodular} cost functions which is a natural condition that can arise in many applications. Borrowing results from the literature on submodular optimization, we then show that for any distribution $P$ over a set $V=\{h_1, \ldots, h_n\}$, if the cost function is $V$-submodular, there exists an oracle-efficient approximation algorithm for the best response problem. Recall, from Equation~\eqref{eq:br}, that the best response problem faced by agent $x$ can be written as
\[
\max_{S \subseteq \{h_1, \ldots, h_n\}} g_x (S) \triangleq  \sum_{h' \in S } P (h') - c(x, S)
\]
where, with slight abuse of notation, we define
\begin{equation}\label{eq:inducedcost}
c(x, S) \triangleq \min_{z: h'(z) = 1, \, \forall h' \in S} c(x,z)
\end{equation}
For any $S \subseteq \{h_1, \ldots, h_n\}$, $c(x,S)$ is simply the minimum cost that the agent $x$ has to incur in order to pass all classifiers in $S$, and can be computed via the oracle (Algorithm~\ref{alg:oracle}). We now state our main assumption on the cost function:

\begin{definition}[$V$-Submodularity]\label{def:submodular}
    Let $V = \{h_1, \ldots, h_n\}$ be any collection of classifiers. We say a cost function $c$ is $V$-submodular, if for all $x$, the set function $c(x,\cdot):2^V \to \reals$ defined in Equation~\ref{eq:inducedcost} is submodular: for every $S,S' \subseteq V$ such that $S \subseteq S'$ and every $h' \notin S'$,
    \[
    c \left(x, S\cup\{h'\} \right) - c \left(x, S \right) \ge c\left(x, S'\cup\{h'\} \right) - c \left(x, S' \right)
    \]
\end{definition}

This condition asks that the marginal cost of passing the new classifier $h'$ is smaller when the new classifier is added to $S'$ versus $S$, for any such $h', S, S'$.

Fix a collection of classifiers $V$. Informally speaking, a cost function is $V$-submodular if the agent's manipulation to pass a classifier only helps her (i.e., reduces her cost) to pass other classifiers: the more classifiers the agent passes, it becomes only easier for her to pass an additional classifier. This can happen in settings where some of the knowledge to pass a certain number of tests is \emph{transferable} across tests. Some real-life examples include: 1) a student that is preparing for a series of math tests on topics like probability, statistics, and combinatorics. 2)  a job applicant who is applying for multiple jobs within the same field and preparing for their interviews.

We give a formal example of a $V$-submodular cost function below. In particular, we show that when $\mathcal{X} = \reals$, the cost function $c(x,x') = |x - x'|$ is $V$-submodular where $V$ can be any set of single-sided threshold classifiers. We provide the proof of the claim in Appendix~\ref{app:c}.

\begin{restatable}{claim}{submodular}\label{clm:submodular}
Let $\mathcal{X} = \reals$, and $V = \{h_1, \ldots, h_n\}$ where every $h_i$ can be written as $h_i (x) = \mathds{1} [ x  \ge t_i]$ for some $t_i \in \reals$. We have that the cost function $c(x,x') = |x - x'|$ is $V$-submodular.
\end{restatable}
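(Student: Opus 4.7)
The plan is to exploit the special structure of single-sided threshold classifiers so that the induced cost $c(x,\cdot)$ collapses to a function of a single scalar, after which $V$-submodularity becomes elementary. Specifically, for any $S \subseteq V$, the constraint that $z$ passes every classifier in $S$ is equivalent to the single constraint $z \ge T(S)$, where $T(S) := \max_{h_i \in S} t_i$ (with the convention $T(\emptyset) := -\infty$). Projecting $x$ onto $[T(S), \infty)$ under the absolute-value cost then yields the closed form
\[
c(x, S) \;=\; \bigl(T(S) - x\bigr)^+, \qquad \text{where } (\cdot)^+ := \max(\cdot, 0).
\]

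Building on this, I would fix the new classifier $h' \notin S'$ with threshold $t'$ and define the marginal-cost function
\[
\phi(a) \;:=\; \bigl(\max(a, t') - x\bigr)^+ - (a - x)^+, \qquad a \in \reals.
\]
A direct substitution then gives $c(x, S \cup \{h'\}) - c(x, S) = \phi(T(S))$, and likewise for $S'$. Since $S \subseteq S'$ forces $T(S) \le T(S')$, the submodularity inequality in Definition~\ref{def:submodular} reduces to the single scalar claim: $\phi$ is non-increasing on $\reals$.

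The remaining step is a short case analysis based on the position of $x$ relative to $t'$. If $x \ge t'$, then both positive parts vanish whenever $a < t'$, and the two terms inside $\phi$ agree whenever $a \ge t'$, giving $\phi \equiv 0$. If $x < t'$, I would partition $\reals$ into $a \le x$, $x < a < t'$, and $a \ge t'$; on these three regions $\phi$ simplifies to $t' - x$, $t' - a$, and $0$ respectively, which is continuous and non-increasing.

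No step here poses a real obstacle: the substantive content is purely the structural observation that for single-sided thresholds the feasibility set $\{z : h_i(z) = 1 \text{ for every } h_i \in S\}$ collapses to the half-line $[T(S), \infty)$, at which point submodularity of $c(x,\cdot)$ becomes equivalent to monotonicity of the univariate function $\phi$, handled by elementary casework.
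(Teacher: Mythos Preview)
Your proof is correct and rests on the same key observation as the paper's: $c(x,S) = \bigl(\max_{h_i \in S} t_i - x\bigr)^+$, after which submodularity is a short case analysis. The paper splits directly on the position of $t'$ relative to $\max(S)$ and $\max(S')$ (and separately on $x$ versus $\max(S)$), computing each marginal cost explicitly; you instead package the marginal cost as the univariate function $\phi(a)$ and reduce the inequality to monotonicity of $\phi$. This is a cleaner organization of the same computation: your casework on $(x,t',a)$ is exactly the paper's casework on $(x,\max(S),\max(S'),h')$ once one identifies $a$ with $\max(S)$ or $\max(S')$, but the reduction to ``$\phi$ is non-increasing'' makes the structure more transparent and avoids the paper's need to treat $x \le \max(S)$ and $x > \max(S)$ as separate top-level cases.
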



We now state the main result of this section:

\begin{theorem}\label{thm:submodular}
    Fix any $\mathcal{H}$ and any distribution $P$ over some $V = \{h_1, \ldots, h_n\} \subseteq \mathcal{H}$. If the cost function $c$ is $V$-submodular, then there exists an algorithm that for every agent $x$ and every $\epsilon > 0$, makes $\Tilde{O}(n / \epsilon^2)$ calls to the oracle (Algorithm~\ref{alg:oracle}) and outputs a set $\hat{S} \subseteq V$ such that
    $
    g_x (\hat{S}) \ge \max_{S \subseteq V} g_x (S) - \epsilon
    $.
\end{theorem}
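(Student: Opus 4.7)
The plan is to reformulate the best-response problem as unconstrained submodular function minimization and then invoke a known fast approximation routine. Let $F(S) \triangleq -g_x(S) = c(x, S) - \sum_{h' \in S} P(h')$. Since $c(x, \cdot)$ is submodular on $2^V$ by the $V$-submodularity hypothesis and $\sum_{h' \in S} P(h')$ is modular, $F$ is submodular on $2^V$. Finding $\hat S$ with $g_x(\hat S) \ge \max_S g_x(S) - \epsilon$ is thus equivalent to finding $\hat S$ with $F(\hat S) \le \min_S F(S) + \epsilon$.

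Now invoke a fast approximate submodular-minimization (SFM) routine --- concretely, the near-linear-query SFM algorithms based on (stochastic) subgradient / mirror-descent methods on the Lovász extension of $F$ over $[0,1]^V$, which achieve $\tilde O(n/\epsilon^2)$ queries to $F$ when $F$ has constant range. The routine outputs an $\epsilon$-approximate (possibly fractional) minimizer; a fractional output is rounded to an integer set $\hat S$ of no larger $F$-value by thresholding at a uniformly random level, using the identity that the Lovász extension equals the expectation of $F$ on the corresponding random level set. Each $F$-query is implemented by one call to Algorithm~\ref{alg:oracle} --- to compute $c(x, S)$ as the projection of $x$ onto $\bigcap_{h' \in S}\{z : h'(z)=1\}$ --- together with an $O(n)$-time summation of the $P$-values, so the $\tilde O(n/\epsilon^2)$ query bound carries over to oracle calls.

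Two technical issues need care. First, $c(x, S)$ can be $+\infty$ when $\bigcap_{h' \in S}\{z : h'(z)=1\} = \emptyset$, which would violate the bounded-range hypothesis behind the $\tilde O(n/\epsilon^2)$ SFM guarantees. Since $g_x(\emptyset) = 0$, every optimal $S$ satisfies $c(x, S) \le P(S) \le 1$, so I would replace $c(x,\cdot)$ by its truncation $\min\{c(x,\cdot), 2\}$; a short marginal-gain case analysis shows that truncating a monotone submodular function at a constant preserves submodularity, and the truncation leaves $\min F$ unchanged while bounding the range of $F$ in $[-1, 2]$. Second, one should check that the relevant subgradient norms of the Lovász extension, combined with the $\sqrt n$-diameter of $[0,1]^V$, give the claimed $\tilde O(n/\epsilon^2)$ rate; this is the standard analysis once the range is bounded. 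The main obstacle I expect is this boundedness/truncation bookkeeping together with invoking the appropriate fast SFM result: once in place, the reduction, the one-to-one $F$-query-to-oracle correspondence, and the rounding step are all routine.
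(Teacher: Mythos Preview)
Your approach is correct but takes a different route from the paper. The paper's proof simply observes that $g_x$ is the difference of a monotone nonnegative modular function ($S \mapsto \sum_{h'\in S} P(h')$) and a monotone nonnegative submodular function ($S \mapsto c(x,S)$), and then invokes a black-box result of \citet{el2020optimal} that gives an $\tilde O(n/\epsilon^2)$-query algorithm for approximately maximizing functions with exactly this structure. You instead push the observation one step further: since modular minus submodular is supermodular, $F=-g_x$ is outright submodular, so the problem is an instance of unconstrained submodular minimization, for which you invoke a fast $\tilde O(n/\epsilon^2)$ approximate SFM routine via (stochastic) subgradient descent on the Lov\'asz extension, together with truncation to enforce bounded range and threshold rounding to recover an integral set.

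The tradeoffs: your reduction is more transparent (it makes clear that the problem is \emph{exactly} SFM, not merely a structured variant requiring a specialized result), and your truncation step cleanly handles the possibility that $c(x,S)=+\infty$ when the positive regions have empty intersection---a case the paper's one-line proof does not explicitly address. On the other hand, the paper's proof is shorter because it offloads all algorithmic details (including, presumably, the boundedness and rounding issues) to the cited reference, whereas you must do the truncation-preserves-submodularity check, the rounding argument, and be careful that the specific SFM rate you invoke actually holds for your setup. Both arguments are valid; yours is more self-contained at the level of the reduction but relies on an equally heavy black box for the final complexity claim.
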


\begin{proof}[Proof of Theorem~\ref{thm:submodular}]
    Note that when the cost function is $V$-submodular, the objective function $g_x$ can be written as the difference of a monotone non-negative modular function\footnote{A set function $r$ is modular if $r(S) = \sum_{s \in S} r(s)$ for any $S$.} and a monotone non-negative submodular function:
    $
    g_x : 2^V \to \reals, \ g_x (S) =  \sum_{h' \in S } P (h') - c(x, S)
    $.
    The result then follows from \citep{el2020optimal} where they provide an efficient algorithm for approximately maximizing set functions with such structure.
\end{proof}

\section{The Learner's Optimization Problem}\label{sec:defense}

In this section we focus on the learner's optimization problem as described in Definition~\ref{defn:partialgame}. The learner is facing a population of agents with prior $\pi$ and wants to release partial information $H\subseteq \mathcal H$ so as to maximize its utility $U(H)$. We note that the learner's strategy space can be restricted to the support of the agents' prior $\pi$ because including anything in $H$ that is outside of $\pi$'s support does not impact $U(H)$. Therefore, one naive algorithm to compute the utility maximizing solution for the learner is to evaluate the utility on all subsets $H \subseteq \text{support} (\pi)$ and output the one that maximizes the utility. However, this solution is inefficient; instead, can we have computationally efficient algorithms? We provide both positive and negative results for a natural instantiation of our model which is introduced below.

\subsection{The Setup: Classification Based on Test Scores}\label{subsec:setup}
Motivated by screening problems such as school admissions, examinations, and hiring, where an individual's qualification level can be captured via a real-valued number, say, a test score, we consider agents that live in the one dimensional Euclidean space: $\mathcal{X} = [0,B] \subseteq \reals$ for some $B$. One can think of each $x$ as the corresponding qualification level or test score of an agent where larger values of $x$ correspond to higher qualification levels or higher test scores. Because we are in a strategic setting, agents can modify their true feature $x$ and ``trick'' or ``game'' the learner by appearing more qualified than they actually are.

We let $f(x) = \mathds{1} \left[ x \ge t \right]$ for some $t$: there exists some threshold $t$ that separates qualified and unqualified agents. We take the hypothesis class $\mathcal{H}$ to be the class of all single-sided threshold classifiers: every $h' \in \mathcal{H}$ can be written as $h' (x) \triangleq \mathds{1} \left[ x \ge t' \right]$ for some $t'$. We further take the cost function of the agents to be the standard distance metric in $\reals$: $c(x,x') = |x' - x|$.\footnote{Our results can be extended to the case where $c(x,x') = k|x' - x|$ for some constant $k$.}

\begin{remark}
    We emphasize that considering agents in the one-dimensional Euclidean space is only for simplicity of exposition. We basically assume, for an \emph{arbitrary} space of agents $\mathcal{X}$, there exists a function $g: \mathcal{X} \to [0,B]$ such that $f(x) = \mathds{1} [g(x) \ge t]$ for some $t$, and that the cost function is given by $c(x,z) = |g(z) - g(x)|$. Here, $g(x)$ captures the qualification level or test score of an agent $x$. Now observe that we can reduce this setting to the introduced setup of this section: take $\mathcal{X'} = \{ g(x): x \in \mathcal{X} \} \subseteq [0,B]$, $f: \mathcal{X'} \to \{0,1\}$ is given by $f(x') = \mathds{1} [x' \ge t]$, and that the cost function $c: \mathcal{X'} \times \mathcal{X'} \to [0, \infty)$ is given by $c(x',z') = |z' - x'|$.
\end{remark}

\begin{remark}
Note that because every classifier $h' \in \mathcal{H}$ is uniquely specified by a real-valued threshold, for simplicity of our notations, we abuse notation and use $h'$ interchangeably as both a \emph{mapping} (the classifier) and a \emph{real-valued number} (the corresponding threshold) throughout this section. The same abuse of notation applies to $f$ as well.
\end{remark}

The classifier deployed by the learner is some $h \ge f$. We note that it is natural to assume $h \ge f$ because in our setup, higher values of $x$ are considered ``better''. So given the strategic behavior of the agents, the learner only wants to make the classification task ``harder'' compared to the ground truth $f$ --- choosing $h < f$ will only hurt the learner's utility. Because we will extensively make use of the fact that $h \ge f$, we state it as a remark below.

\begin{remark}
    The learner's adopted classifier is some $h \in \mathcal H$ such that $h \ge f$.
\end{remark}

\begin{remark}\label{remark:tie-breaking}
As mentioned in the model section, when there are several utility-maximizing solutions for the agents, we always break ties in favor of the lowest cost solution. Furthermore, each agent $x$ in our setup manipulate \emph{only} to larger values of $x$ ($x' \ge x$); this is formally stated in the first part of Lemma~\ref{lem:facts}. Therefore, the tie-breaking of agents is in favor of smaller values of $x'$ in our setup. In other words, given some released information $H$, an agent $x$ chooses 
\begin{equation}
         \br(x, H) = \min \left\{ \argmax_{x' \ge x} u_x (x',H) \right\} 
\end{equation}
In the rest of this section, when we write $\argmax_{x' \ge x} u_x (x',H)$, we implicitly are taking the smallest $x' \ge x$ that maximizes the utility of the agent $x$.
\end{remark}

We finish this section by stating some useful facts about the agents' best response in our setup. The proof of this lemma is provided in Appendix~\ref{app:a}.

\begin{restatable}{lemma}{facts}\label{lem:facts}
    Fix any prior $\pi$ and any points $x_2 \ge x_1$. We have that, for any $H \subseteq \mathcal H$,
    \begin{enumerate}
        \item $\br(x_1, H) \ge x_1$.
        \item $\br (x_2 , H) \ge \br (x_1 , H)$.
        \item If $\br (x_1 , H) \ge x_2$, then $\br (x_1 , H) = \br (x_2 , H)$.
    \end{enumerate}
\end{restatable}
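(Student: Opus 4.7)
The plan is to exploit two structural features of the setup: since every classifier in the hypothesis class is a single-sided threshold, the acceptance probability $P(z) := \Pr_{h' \sim \pi \vert_H}[h'(z) = 1]$ is non-decreasing in $z$; and since the cost is $|x' - x|$, for any destination $z \ge x$ the utility simplifies to $u_x(z, H) = P(z) - (z - x)$. I will lean on these two facts throughout, together with the tie-breaking rule from Remark~\ref{remark:tie-breaking} that in case of equal utilities the best response picks the smallest eligible $x' \ge x$.

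For part~1, I would rule out any $z < x_1$ from being the best response of $x_1$ directly: moving from $x_1$ down to $z$ strictly increases the cost by $x_1 - z > 0$ while weakly decreasing $P$ by monotonicity, so $u_{x_1}(z,H) < u_{x_1}(x_1,H)$, forcing $\br(x_1,H) \ge x_1$.

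For part~2, I would argue by contradiction: suppose $z_1 := \br(x_1,H) > z_2 := \br(x_2,H)$. By part~1 applied to $x_2$, we have $x_1 \le x_2 \le z_2 < z_1$, so both $z_1$ and $z_2$ lie above $x_1$ and above $x_2$, and the simple utility formula applies at every relevant comparison. The optimality of $z_1$ over $z_2$ for $x_1$, and of $z_2$ over $z_1$ for $x_2$, yield respectively
\[
P(z_1) - P(z_2) \ge z_1 - z_2 \qquad \text{and} \qquad P(z_1) - P(z_2) \le z_1 - z_2,
\]
forcing equality. But equality means $z_1$ and $z_2$ give the same utility to $x_1$, and then the tie-breaking rule forces $\br(x_1,H) \le z_2 < z_1$, contradicting $z_1 = \br(x_1,H)$.

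For part~3, let $z_1 := \br(x_1,H)$ and $z_2 := \br(x_2,H)$; by hypothesis $z_1 \ge x_2$, and by part~2 we have $z_2 \ge z_1 \ge x_2 \ge x_1$. Optimality of $z_1$ over $z_2$ for $x_1$ (both dominate $x_1$) gives $P(z_2) - P(z_1) \le z_2 - z_1$. Since $z_1, z_2 \ge x_2$, this same inequality rearranges directly to $u_{x_2}(z_1,H) \ge u_{x_2}(z_2,H)$, so $z_1$ is itself utility-maximizing for $x_2$. Tie-breaking toward the smallest maximizer then forces $z_2 \le z_1$, which combined with $z_2 \ge z_1$ gives $z_1 = z_2$.

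The only subtle point is the careful use of tie-breaking: the strict conclusions in parts~2 and~3 fail without it, and the argument is otherwise a routine single-crossing / monotone-comparative-statics fact. I do not expect any genuine obstacle beyond bookkeeping which destination $z$ sits above which source $x_i$, so that the clean utility formula $P(z) - (z - x_i)$ applies at each comparison.
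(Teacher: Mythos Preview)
Your proposal is correct and follows essentially the same approach as the paper: both hinge on the observation that for $x' \ge x$ the utility is $P(x') - (x' - x)$, so comparing destinations amounts to comparing the agent-independent quantity $P(x') - x'$. The paper packages this a bit more compactly by defining $g_H(x') := \Pr_{h' \sim \pi|_H}[x' \ge h'] - x'$ and writing $\br(x,H) = \argmax_{x' \ge x} g_H(x')$, from which all three parts drop out at once; your pairwise inequality arguments are exactly what is needed to unpack the paper's terse ``follows from this derivation.''
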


\subsection{NP-Hardness for Arbitrary Prior Distributions}
In this section, we show that under the introduced setup, the learner's optimization problem is NP-hard if the prior can be chosen arbitrarily. We show this by a reduction from the \emph{subset sum} problem which is known to be NP-hard. 

\begin{theorem}\label{thm:np-hard}
    Consider an arbitrary prior $\pi$ over a set of threshold classifiers $\{h_1, h_2, \ldots, h_n \} \subseteq \mathcal H$ that includes $h$. 
    The problem of finding $H \subseteq \{h_1, h_2, \ldots, h_n \}$ so that $h\in H$ and the learner's utility $U(H)$ is maximized is NP-hard. 
\end{theorem}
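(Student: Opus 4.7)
The plan is to reduce from \textsc{Subset Sum}: given positive integers $a_1, \dots, a_n$ with total $M = \sum_i a_i$ and target $T \le M$, decide whether there exists $S \subseteq [n]$ with $\sum_{i \in S} a_i = T$. Given such an instance, I would build a one-dimensional learner's instance with threshold classifiers and a carefully crafted prior so that the feasibility of $\sum_{i\in S} a_i = T$ is equivalent to $\max_H U(H)$ reaching a particular value.

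Concretely, I would place the deployed classifier $h$ at some threshold $t_0$, and the $n$ available classifiers $h_1, \dots, h_n$ at positions $t_i > t_0$, chosen close enough to $t_0$ that an agent lying just below $t_0$ finds it worthwhile to move past $h$ only when the posterior weight $\pi\vert_H(h)$ (plus the weights of $h_i$'s that still lie below the chosen destination) crosses a precise value. The prior is set so that $\pi(h_i) \propto a_i$ together with a fixed anchor weight on $h$; then choosing $H = \{h\} \cup S$ yields $\pi\vert_H(h) = c/(c + \sum_{i \in S} a_i)$ for a constant $c$, so the posterior faithfully encodes the arithmetic of the subset sum. I would then introduce a single ``pivotal'' agent $x^\star$ sitting immediately to the left of $t_0$ whose quasi-linear utility, as a function of $\sum_{i \in S} a_i$, is piecewise linear with its maximum attained exactly at $\sum_{i \in S} a_i = T$ (for example by tuning the gap between $t_0$ and the $t_i$'s to match the cost reduction needed to flip $x^\star$'s correct/incorrect classification under $h$). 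The remaining agents would be placed far from any threshold so that their best responses, and hence their contribution to $U(H)$, are invariant to the choice of $S$.

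The main steps I would carry out, in order, are: (i) use Lemma~\ref{lem:facts} together with the threshold structure to show that in the constructed instance every agent's best response is either ``stay'' or ``move up to some $h' \in H$'', so $\br(x,H)$ is computable from $H$ via a short piecewise-constant rule; (ii) derive a closed-form expression for $U(H)$ as a function solely of $\sum_{i \in S} a_i$ (where $H = \{h\} \cup S$), using the fact that only the pivotal agent's classification depends on $S$; (iii) choose the numerical parameters (the anchor weight $c$, the positions $t_i - t_0$, and the location of $x^\star$) so that this function is uniquely maximized when $\sum_{i \in S} a_i = T$, and verify that all parameters have encoding size polynomial in the \textsc{Subset Sum} instance; (iv) conclude that a polynomial-time algorithm for the learner's problem decides whether the optimum value is attained, hence solves \textsc{Subset Sum}.

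The hard part I expect is step~(iii): because the posterior normalization makes $\pi\vert_H(h)$ a nonlinear (fractional) function of $\sum_{i \in S} a_i$, getting the pivotal agent's indifference point to land exactly at $\sum_{i \in S} a_i = T$ while ruling out spurious maximizers at other subset sums requires care. A natural way to handle this is to rescale the anchor weight $c$ to be large relative to $M$ so that $\pi\vert_H(h) \approx (c - \sum_{i\in S} a_i/c)$ to first order, turning the normalization into an essentially linear dependence on the subset sum; any residual nonlinearity is then controlled by choosing $c$ (and the cost-tolerance gap at $x^\star$) large enough that the function of $\sum_{i\in S} a_i$ is strictly concave-like with a unique optimum at $T$. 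The remaining bookkeeping, showing polynomial encoding size and that tie-breaking (Remark~\ref{remark:tie-breaking}) does not perturb the reduction, is routine.
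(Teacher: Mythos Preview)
Your high-level plan---reduce from \textsc{Subset Sum} and encode the subset sum in the posterior via prior weights $\pi(h_i)\propto a_i$---matches the paper's. But two concrete choices in your construction would not work as written.

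First, a single pivotal agent cannot produce a utility that is \emph{uniquely} maximized at $\sum_{i\in S}a_i=T$. With one agent $x^\star$, the learner's accuracy contribution from $x^\star$ is binary: either $\br(x^\star,H)\ge h$ or not. As $\sum_{i\in S}a_i$ varies, this flips once at some threshold, so $U(H)$ is monotone (step-like) in the subset sum rather than peaked at $T$. To force a unique optimum at an interior value you need error on \emph{both} sides of $T$: too little mass in $H\setminus\{h\}$ must cause false positives (negatives that now find it worthwhile to cross $h$), and too much mass must cause false negatives (positives that no longer find it worthwhile). The paper achieves exactly this two-sided penalty by using a \emph{continuous} data distribution (uniform on $[-1000,1000]$), with $f$ at $0$ and $h$ at $2/3$, so that whether $\rho_H\triangleq\pi|_H(h)$ lies above, below, or exactly at $2/3$ determines which interval of agents is misclassified; perfect accuracy occurs iff $\rho_H=2/3$, i.e., iff $\sum_{h_i\in H}a_i=T$.

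Second, you place the $h_i$'s ``close enough to $t_0$'' and then try to tame the resulting nonlinearity by making the anchor weight $c$ large so that $\pi|_H(h)$ is ``essentially linear'' in the subset sum. This is both unnecessary and risky: with nearby $h_i$'s, the pivotal agent's best response can be to jump to some $h_j$ rather than to $h$, so $\br(x^\star,H)$ depends on the \emph{positions} of the included $h_i$'s, not just on $\sum_{i\in S}a_i$; your step~(ii) claim that $U(H)$ depends only on the sum then fails. The paper sidesteps all of this by placing every $h_i$ at threshold $100+i$, far to the right of $h=2/3$. Then no agent near the $f$/$h$ boundary would ever jump to an $h_i$, so the \emph{only} effect of including $h_i$'s is to dilute the posterior weight on $h$. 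The equation $\rho_H=\tfrac{1/2}{1/2+\sum_{h_i\in H}a_i/(4T)}=2/3$ is exact (no linearization), and reduces directly to $\sum_{h_i\in H}a_i=T$. Adopting these two changes---a continuous data distribution straddling $f$ and $h$, and far-away $h_i$'s---turns your sketch into the paper's proof.
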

\begin{proof}[Proof of Theorem~\ref{thm:np-hard}]
    The proof is via a reduction from the subset sum problem. In particular, we consider a variant of the subset sum problem in which we are given a set of $n$ positive numbers $a_1, \cdots, a_n$, and the goal is to decide whether a subset $S \subset [n]$ such that $\sum_{i\in S} a_i = T := (1/2) \sum_{i\in [n]} a_i$ exists.

    Given an instance of the subset sum problem with input $(\{a_1,\cdots, a_n\}, T:= (1/2) \sum_{i\in [n]} a_i)$, we construct the following instance of our problem with one-dimensional threshold classifiers. Define  $f (x) = \mathds{1} \left[ x \ge 0\right]$, $h (x) = \mathds{1} \left[x \ge 2/3 \right]$, and $h_i (x) = \mathds{1} \left[x \ge 100 + i \right]$ for every $i\in [n]$. Moreover, suppose that the prior distribution of the agents $\pi$ is given by: $\pi(h) = 1/2$ and for every $i\in [n]$, $ \pi (h_i) = a_i / (4T)$. Note that $\pi (h) + \sum_{i}  \pi (h_i) = 1$. Let  the data distribution $D$ be the uniform distribution over $[-1000, 1000]$.

    Intuitively speaking, the inclusion of $h_i$'s in $H$ have no direct effect on the accuracy of the released subset $H$, as they can only lead to a subset of the agents located at $x \ge 100$ to manipulate. However, their presence in $H$ will impact the probability mass of $h$ under the posterior $\pi \vert_H$, which is given by
    $
    \pi \vert_H (h) = \pi(h) / \pi (H) \triangleq \rho_H
    $.
    We will show that the learner can achieve perfect accuracy \emph{if and only if} in the given instance subset sum problem there exists a subset which sums up to $T$. To see this consider the following cases for the released information $H$.
    \begin{itemize}
        \item {\bf Case 1: $\rho_H > 2/3$.}
        For any such $H$, all agents at distance $\rho_H$ from $2/3$ gain positive utility by manipulating to $x'=2/3$. Hence, the utility of such solutions for the learner is given by
        $
        1 - \Pr_{x \sim D}[x\in [\frac{2}{3} - \rho_H,0)] < 1
        $.
        \item {\bf Case 2: $\rho_H < 2/3$.}
        For any such $H$, as all classifiers in $H\setminus \{h\}$ are located at $t > 100$, no agent belonging to $[0,2/3-\rho_H)$ gain positive utility by manipulating to $x'=2/3$. Hence, these points will be misclassified by $h$, and consequently, the utility of such solutions for the learner is given by
        $
        1 - \Pr_{x \sim D}[x\in [0, \frac{2}{3}-\rho_H)] < 1
        $.
        \item {\bf Case 3: $\rho_H = 2/3$.} By similar arguments to the previous cases, all agents belonging to $[0, 2/3)$ manipulate to $x' = 2/3$ and all points with negative labels ($x<0$) stay at their location. Therefore, no one will be misclassified, and therefore, the utility of such solutions is $1$.  
    \end{itemize}

Note that because
$
\rho_H = \frac{\pi(h)}{\pi (H)} = \frac{1/2}{1/2 + \pi (H\cap \{h_1, \cdots, h_n\})}
$,
we have that $\rho_H = 2/3$ if and only if $ \pi (H\cap \{h_1, \cdots, h_n\}) = 1/4$. But
$
\pi (H\cap \{h_1, \cdots, h_n\}) = 1/(4T) \sum_{h_i\in H} a_i
$.
We therefore have that $\rho_H = 2/3$ \emph{if and only if} $\sum_{h_i\in H} a_i = T$. Hence, deciding whether the learner's optimization problem has a solution with perfect utility is equivalent to deciding whether in the given subset sum problem there exists a subset $S \subset [n]$ such that $\sum_{i\in S} a_i = T := (1/2) \sum_{i\in [n]} a_i$.       
\end{proof}

\subsection{A Closed-form Solution for Continuous Uniform Priors}
Given the hardness of the learner's problem for arbitrary prior distributions, we focus on a specific family of priors, namely, uniform priors over a given set, and examine the existence of efficient algorithms for such priors. In this section, we provide closed-form solutions for \emph{continuous} uniform priors. More concretely, we assume in this section that $\pi$ is the uniform distribution over an interval $[a,b] \subset \reals$ that includes $h$. The information release of the learner will then be releasing an interval $H = [c,d] \subseteq [a,b]$ such that $h \in [c,d]$.

\begin{theorem}\label{thm:unif-cont}
    Fix any data distribution $D$ over $\mathcal{X}$. Suppose the prior $\pi$ is uniform over an interval $[a,b]$ for some $a,b$ such that $h \in [a,b]$. Define $H_c \triangleq \left[ c, d \right]$ where $d \triangleq \min \left( b, \max \left( h, f+1 \right) \right)$.
    
    If $b - a < 1$, we have that $H^\star = H_c$ is optimal for any $c \in [a, h]$, with corresponding utility
    \[
    U ( H_c) = \begin{cases}
    1 - \Pr_{x \sim D} \left[ d-1 < x <  f \right] & d-1 < f \\
    1 - \Pr_{x \sim D} \left[ f \le x \le d-1 \right] & d-1 \ge f
    \end{cases}
    \]
    If $b-a \ge 1$, we have that for any $c \in (b-1, h]$, the optimal solution $H^\star$ is given by
    \[ H^\star = 
    \begin{cases}
      H_c &   U ( H_c ) > U ( [a,b] )
      \\
      [a,b] & U ( H_c ) \le U ( [a,b] )
    \end{cases}
    \]
    where $U ( H_c )$ is given above and $U ( [a,b] ) = 1 - \Pr_{x \sim D} \left[ f \le x < h \right]$ is the utility of releasing $[a,b]$.
\end{theorem}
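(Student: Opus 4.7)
The plan is to decompose the problem into three steps: (i) compute the agents' best response $\br(x,H)$ explicitly for every interval $H=[c,d]\subseteq[a,b]$ with $h\in[c,d]$, (ii) derive a closed-form expression for the learner's utility $U([c,d])$ in terms of $c,d$, and (iii) optimize this expression over $(c,d)$. The key simplification is that in this one-dimensional setup with uniform posterior on $[c,d]$, the utility on any middle piece $x'\in[c,d]$ of an agent $x$ is $\frac{x'-c}{d-c}-(x'-x)$, whose slope $\frac{1}{d-c}-1$ has sign determined entirely by whether $d-c<1$ or $d-c\ge 1$. This dichotomy drives the whole analysis.

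For the best-response step I would show: if $d-c<1$, then $\br(x,[c,d])=d$ for $x\in(d-1,d)$ and $\br(x,[c,d])=x$ otherwise; if $d-c\ge 1$, then $\br(x,[c,d])=x$ for all $x$. This follows by comparing the three candidate maxima (stay at $x$, move to $c$, move to $d$) using monotonicity on each linear piece, and then applying the low-cost tie-breaking rule of Remark~\ref{remark:tie-breaking}. Next, to evaluate $U([c,d])$ when $d-c<1$, I use the structural fact that $c\le h\le d$ together with $d-c<1$ forces $h>d-1$, so the only way an error arises is among agents classified positive at $d$ (potential false positives on $(d-1,f)$ when $d-1<f$) or among stationary agents in $[f,d-1]$ misclassified as negative (false negatives when $d-1\ge f$). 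In both subcases, using $h\ge f$ and $d\ge h$, I would check that every other potential error region vanishes, yielding exactly the two displayed cases. Crucially, this utility depends on $d$ but \emph{not} on $c$, which explains why the theorem's optimum holds uniformly for $c$ in an interval.

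For the optimization step I would argue that among all $[c,d]$ with $d-c<1$, the utility is maximized at $d=\min(b,\max(h,f+1))$ as defined in the theorem: the error $\Pr[d-1<x<f]$ is decreasing in $d$ until $d=f+1$, at which point it is zero, and then $\Pr[f\le x\le d-1]$ starts increasing; combined with the admissibility constraints $h\le d\le b$, the unconstrained optimum is clipped to exactly this $d$. When $b-a<1$, any $[c,d]$ inside $[a,b]$ automatically satisfies $d-c<1$, so setting $c\in[a,h]$ arbitrarily and $d$ as above is optimal, yielding the first part of the theorem. When $b-a\ge 1$, restricting to $c\in(b-1,h]$ guarantees $d-c<1$ (since $d\le b$), giving the utility $U(H_c)$ just derived; the only alternative regime is $d'-c'\ge 1$, in which all agents stay, so the classifier acts on original features and produces error exactly $\Pr[f\le x<h]$, i.e.\ $U([a,b])$. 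Taking the better of these two regimes gives the second case.

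The main obstacle I anticipate is the careful bookkeeping of the error regions in step (ii): one must verify, across all sub-orderings of $c,h,d-1,d,f,f+1$, that no false positive arises for stationary agents $x\le d-1$ (which relies on $h\ge f$ so that $[h,d-1]\cap[0,f)=\emptyset$) and no false negative arises for moving agents $x\in(d-1,d)$ (which relies on $d\ge h$). The fact $h>d-1$ whenever $d-c<1$ and $c\le h\le d$ is the hinge of this argument, and I would state it as an elementary lemma before the main case analysis to streamline the proof and make the subsequent algebra routine.
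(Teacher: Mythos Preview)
Your proposal is correct and follows essentially the same approach as the paper's own proof: compute $\br(x,[c,d])$ via the slope dichotomy $d-c<1$ versus $d-c\ge 1$, translate this into the two-case utility formula depending on the position of $d-1$ relative to $f$, and then optimize over $d$ (clipping $f+1$ to $[h,b]$) while noting the utility is independent of $c$. Your explicit isolation of the fact $h>d-1$ as a preliminary lemma is a nice organizational touch that the paper leaves implicit, but the underlying decomposition and argument are the same.
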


\begin{proof}[Proof of Theorem~\ref{thm:unif-cont}]
    Suppose $H = [c,d] \subseteq [a,b]$ is the released information by the learner. The agents then project their uniform prior $\pi$ over $[a,b]$ onto $H$, which leads to the uniform distribution over $[c,d]$ for $\pi \vert_H$, and then best respond according to $\pi \vert_H$. Therefore, for any agent $x$,
    \[
    \br(x, H) = \argmax_{x' \ge x} \left\{ \Pr_{h' \sim Unif[c,d]} \left[x' \ge h' \right] - ( x' - x) \right\}
    \]
    One can then show that if $d - c \ge 1$, $\br(x, H) = x$ for all $x$ because for any manipulation ($x' > x$), the marginal gain in the probability of receiving a positive classification is less than the marginal cost. Furthermore, if $d - c < 1$, then we have
    \[
    \br(x, H) = \begin{cases}
        d & d - 1 < x < d \\ x & \text{Otherwise}
    \end{cases}
    \]
    Therefore, for any $H = [c,d] \subseteq [a,b]$, if $d - c \ge 1$, we have $U (H) = 1 - \Pr_{x \sim D} \left[ f \le x < h \right]$, and if $d - c < 1$, we have
    \[
    U (H) = \begin{cases}
    1 - \Pr_{x \sim D} \left[ d-1 < x <  f \right] & d-1 < f \\
    1 - \Pr_{x \sim D} \left[ f \le x \le d-1 \right] & d-1 \ge f
    \end{cases}
    \]
    This is because under $d-c \ge 1$, no one manipulates, and thus, the error corresponds to the probability mass between $f$ and $h$: the positives who cannot manipulate to pass $h$. Under $d-c <1$, because every agent $x > d - 1$ can receive positive classification by manipulating, the error of $H$ corresponds to the probability mass between $d-1$ and $f$: if $d-1 < f$, this corresponds to the negatives who can manipulate and receive positive classification, and if $d-1 \ge f$, this corresponds to the positives who cannot manipulate to receive positive classification.
    
    Now assume $b - a < 1$, which implies that $d-c < 1$. At a high level, to maximize $U (H)$ in this case, we want to pick $d$ such that $d-1$ is as close as possible to $f$. More formally, our goal is to pick $[c,d] \subseteq [a,b]$ such that $h \in [c,d]$ and that the probability mass between $d-1$ and $f$ is minimized. In this case, one can see, via a case analysis, that $d = \min \left( b, \max \left( h, f+1 \right) \right)$ is the optimal value, and that $c$ can be any point in $[a, h]$.
    
    If $b - a \ge 1$, then both $d - c < 1$ and $d - c \ge 1$ are possible. If $d - c < 1$, then the optimality of $[c,d]$ where $c$ is any point in $(b - 1, h]$, and $d = \min \left( b, \max \left( h, f+1 \right) \right)$ can be established as above. Note that the choice of $c \in (b - 1, h]$ guarantees that $d - c < 1$. If $d - c \ge 1$, then the utility of the learner doesn't change if $[c,d] = [a, b]$ simply because the agents do not manipulate for any $[c,d]$ such that $d - c \ge 1$. Finally, the optimal interval is chosen  based on which case ($d - c <1$ vs. $d - c \ge 1$) leads to higher utility.
\end{proof}

\subsection{An Efficient Algorithm for Discrete Uniform Priors}\label{subsec:unif-prior}
In this section we will provide an efficient algorithm for computing the learner's optimal information release when the prior $\pi$ is a \emph{discrete} uniform distribution over a set $\{h_1, h_2, \ldots, h_n \} \subseteq \mathcal{H}$ that includes the adopted classifier $h$. The objective of the learner is to release a $H \subseteq \{h_1, h_2, \ldots, h_n \}$ such that $h \in H$. Throughout, we take $h = h_k$ where $1 \le k \le n$, and assume $h_1 \le h_2 \le \ldots \le h_n$. All the missing proofs of this section are provided in Appendix~\ref{app:b}.

We first state some facts about the agents' best response for \emph{any} prior $\pi$ over $\{h_1, h_2, \ldots, h_n \} \subseteq \mathcal{H}$. To start, we first show that the best response of any agent can be characterized as follows:

\begin{restatable}{lemma}{bestreponsecharac}\label{lem:bestreponse-charac}
    For any agent $x$, and any prior $\pi$ over $\{h_1, h_2, \ldots, h_n \} \subseteq \mathcal{H}$, we have 
    $
    \br (x,H) \in \{x\}\cup \{h_i \in H :h_i>x\}.
    $
\end{restatable}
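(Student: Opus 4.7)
The plan is to combine the monotonicity from Lemma~\ref{lem:facts} with the piecewise-constant structure of the acceptance probability as a function of the manipulated feature. Concretely, since every classifier in the support is a one-sided threshold and all thresholds lie in $H$, for any $x'$ the probability of a positive outcome under the posterior is
\[
\Pr_{h' \sim \pi\vert_H}\!\left[h'(x')=1\right] \;=\; \sum_{h_i \in H \,:\, h_i \le x'} \pi\vert_H(h_i),
\]
which, viewed as a function of $x'$, is a right-continuous step function that is constant between consecutive thresholds in $H$ and jumps up only at those thresholds. The cost $x' - x$, in contrast, is strictly increasing in $x'$ on the relevant range $x' \ge x$ (by part~1 of Lemma~\ref{lem:facts}).

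Given this, my plan is a direct contradiction argument. Suppose $\br(x,H) = x^\star$ with $x^\star \neq x$; by Lemma~\ref{lem:facts}, $x^\star > x$. I split into two cases based on whether any threshold of $H$ lies strictly between $x$ and $x^\star$ (inclusive of $x^\star$). In the first case, let $S = \{h_i \in H : x < h_i \le x^\star\}$ be nonempty and put $h^\star = \max S$. Since no threshold of $H$ lies in $(h^\star, x^\star]$, the set $\{h_i \in H : h_i \le h^\star\}$ equals $\{h_i \in H : h_i \le x^\star\}$, so the acceptance probabilities at $h^\star$ and $x^\star$ coincide; but the cost satisfies $h^\star - x < x^\star - x$, so $u_x(h^\star,H) > u_x(x^\star,H)$, contradicting optimality of $x^\star$ (and note that $h^\star \ge x$ always, so it is a feasible manipulation). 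In the second case, $S$ is empty, hence $\{h_i \in H : h_i \le x^\star\} = \{h_i \in H : h_i \le x\}$, so the acceptance probabilities at $x^\star$ and $x$ agree while the cost at $x$ is strictly smaller, giving $u_x(x,H) > u_x(x^\star,H)$, again a contradiction. Thus, if $\br(x,H) \neq x$, it must equal some $h_i \in H$ with $h_i > x$, which is exactly the claim.

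I do not foresee a real obstacle: the only thing to handle carefully is the tie-breaking convention of Remark~\ref{remark:tie-breaking}, which prefers the smallest utility-maximizer, but in both cases above the utility comparison is \emph{strict}, so the tie-breaking rule is not actually invoked. The argument uses only Lemma~\ref{lem:facts}(1) (that agents only move upward) and the step-function structure of the posterior acceptance probability, both of which are consequences of the one-dimensional threshold setup established at the start of Section~\ref{subsec:setup}.
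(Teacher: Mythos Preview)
Your proof is correct and follows essentially the same approach as the paper's: both exploit that the posterior acceptance probability is a step function jumping only at thresholds in $H$, so any non-threshold point $x^\star > x$ can be strictly improved by sliding down to the nearest threshold in $H$ (or to $x$ itself). One small presentational fix in Case~1: you should first dispose of the sub-case $h^\star = x^\star$ (in which $x^\star$ is already a threshold in $H$ with $h^\star > x$, so there is nothing to prove), since as written your strict cost inequality $h^\star - x < x^\star - x$ fails there; equivalently, frame the contradiction by assuming at the outset that $x^\star \notin \{x\}\cup\{h_i\in H: h_i>x\}$, which is exactly how the paper phrases it.
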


This lemma basically tells us that the best response of any agent $x$ is either to stay at its location, or to manipulate to $h_i\in H$ such that  $h_i>x$. Given such characterization of the agents' best response in our setup, we now characterize, for any classifier $h_i$ in the support of $\pi$, the set of agents that will manipulate to $h_i$.

\begin{restatable}{lemma}{interval}\label{lem:interval}
    Fix any prior $\pi$ over $\{ h_1, \ldots, h_n \}$ and any $H$. If for any $i$, $\left\{ x: \br ( x, H) = h_i \right\} \neq \emptyset$, then for some $\alpha$,
    $\left\{ x: \br ( x, H) = h_i \right\} = (\alpha, h_i]
    $, where $\alpha$ satisfies $u_\alpha (\alpha, H) = u_\alpha(h_i, H)$.
\end{restatable}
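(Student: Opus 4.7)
The plan is to fix the set $S \triangleq \{x : \br(x, H) = h_i\}$ and proceed in two steps: first conclude that $S$ is a convex interval with right endpoint $h_i$ using the monotonicity results, then pin down its left endpoint $\alpha \triangleq \inf S$ via a local perturbation argument together with the tie-breaking convention of Remark~\ref{remark:tie-breaking}, showing both $\alpha \notin S$ and $u_\alpha(\alpha, H) = u_\alpha(h_i, H)$.

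Step one uses the earlier results directly. By Lemma~\ref{lem:bestreponse-charac}, $\br(x, H) = h_i$ forces $x \le h_i$, so $S \subseteq (-\infty, h_i]$. If $x_0 \in S$ and $x \in [x_0, h_i]$, then Lemma~\ref{lem:facts}, part 2, gives $\br(x, H) \ge \br(x_0, H) = h_i$, and part 3 applied with $x_1 = x_0$, $x_2 = x$ (valid since $\br(x_0, H) = h_i \ge x$) yields $\br(x, H) = \br(x_0, H) = h_i$; hence $[x_0, h_i] \subseteq S$ for every $x_0 \in S$, so $h_i \in S$ and $S$ equals either $[\alpha, h_i]$ or $(\alpha, h_i]$.

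Step two tracks the utility gap $\Delta(x) \triangleq u_x(h_i, H) - u_x(x, H) = \pi \vert_H (\{h_j : x < h_j \le h_i\}) - (h_i - x)$. For every $x \in (\alpha, h_i]$, tie-breaking forces $\Delta(x) > 0$ strictly, since $\br(x, H) = h_i$ with $h_i > x$ requires the utility at $h_i$ to strictly exceed the utility at $x$ (otherwise the smaller point $x$ would be selected). A direct computation yields $\Delta(\alpha + \epsilon) = \Delta(\alpha) + \epsilon$ for $\epsilon > 0$ small enough that no classifier threshold lies in $(\alpha, \alpha + \epsilon]$. If $\Delta(\alpha) < 0$, then $\Delta(\alpha + \epsilon) < 0$ for small $\epsilon$, so $\br(\alpha + \epsilon, H) \ne h_i$, contradicting $(\alpha, h_i] \subseteq S$. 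If $\Delta(\alpha) > 0$, then a parallel analysis at $\alpha - \epsilon$, combined with the observation that the utility differences $u_x(h_i, H) - u_x(h_j, H)$ for $h_j \in H$ with $h_j > \alpha$ are invariant under a small shift in $x$, shows that $h_i$ strictly beats every other option at $\alpha - \epsilon$, so $\br(\alpha - \epsilon, H) = h_i$ and hence $\alpha - \epsilon \in S$, contradicting $\alpha = \inf S$. Therefore $\Delta(\alpha) = 0$, i.e., $u_\alpha(\alpha, H) = u_\alpha(h_i, H)$, and tie-breaking then forces $\br(\alpha, H) = \alpha \ne h_i$, so $\alpha \notin S$ and $S = (\alpha, h_i]$.

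The main subtlety I anticipate is the atom case $\alpha = h_m \in H$, in which $\pi \vert_H (\{h_j \le x\})$ jumps at $\alpha$ and the naive expansion of $\Delta(\alpha - \epsilon)$ picks up an extra $\pi \vert_H (\{h_m\})$ term. I would handle this by restricting to $\epsilon < \min\{|\Delta(\alpha)|, \pi \vert_H (\{h_m\})\}$ and directly comparing the three utilities $u_{\alpha - \epsilon}(\alpha - \epsilon, H)$, $u_{\alpha - \epsilon}(h_m, H)$, and $u_{\alpha - \epsilon}(h_i, H)$: the atom mass at $h_m$ makes $h_m$ strictly beat staying, while $\Delta(\alpha) > 0$ makes $h_i$ strictly beat $h_m$, so $\br(\alpha - \epsilon, H) = h_i$ and the contradiction still goes through.
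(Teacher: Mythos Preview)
Your proposal is correct and follows essentially the same two-step strategy as the paper: set $\alpha=\inf S$, use Lemma~\ref{lem:facts} to get $(\alpha,h_i]\subseteq S$, then perturb at $\alpha\pm\epsilon$ to show both $\alpha\notin S$ and $u_\alpha(\alpha,H)=u_\alpha(h_i,H)$. One simplification you can borrow from the paper: once $(\alpha,h_i]\subseteq S$ is established, parts~2 and~3 of Lemma~\ref{lem:facts} force $\br(\alpha,H)\in\{\alpha,h_i\}$, and under the contradiction hypothesis $\br(\alpha,H)=h_i$ they similarly force $\br(\alpha-\epsilon,H)\in\{\alpha-\epsilon,h_i\}$ (any best response landing in $[\alpha,h_i]$ must coincide with $\br(\alpha,H)=h_i$), which collapses your direct comparison against all $h_j>\alpha$ and your separate atom-case analysis into a single line.
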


Next, we characterize the utility of any subset $H$ released by the learner using a real-valued function of $H$. Define, for any $H \subseteq \{ h_1, \ldots, h_n\}$ such that $h \in H$,
\begin{equation}\label{eq:R_H}
R_H \triangleq \inf \left\{ x: \br ( x, H) \ge h \right\}
\end{equation}
Note that $\br (x=h, H) \ge h$ for any $H$ such that $h \in H$. Therefore, $\left\{ x: \br ( x, H) \ge h \right\}$ is nonempty, and that $R_H \le h$ for any $H$ such that $h \in H$. Our next lemma shows that $R_H$ characterizes the utility of $H$ for the learner, for any prior $\pi$ over $\{ h_1, \ldots, h_n \}$.

\begin{lemma}[Learner's Utility]\label{lem:learnersutility}
Fix any prior $\pi$ over $\{ h_1, \ldots, h_n \}$. We have that for any $H \subseteq \{ h_1, \ldots, h_n \}$ such that $h \in H$, the utility of the learner, given by Equation~\ref{eq:utilityPartial}, can be written as
\begin{equation}\label{eq:utility}
U (H) = \begin{cases}
    1 - \Pr_{x \sim D} \left[ R_H < x <  f \right] & R_H < f \\
    1 - \Pr_{x \sim D} \left[ f \le x \le R_H \right] & R_H \ge f
\end{cases}
\end{equation}
\end{lemma}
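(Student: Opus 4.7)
The plan is to convert the utility computation into a statement about the structure of the set $S_H := \{x \in \mathcal{X} : \br(x,H) \geq h\}$. Since every classifier in $\mathcal{H}$ is a single-sided threshold, $h(\br(x,H)) = \mathds{1}[\br(x,H) \geq h]$ and $f(x) = \mathds{1}[x \geq f]$; hence an agent $x$ is correctly classified iff $\mathds{1}[x \in S_H] = \mathds{1}[x \geq f]$, and the misclassified agents form the symmetric difference of $S_H$ and $\{x : x \geq f\}$. So I only need to identify $S_H$ as a (half-open/closed) upper tail and intersect with $\{x \geq f\}$.

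First I would argue that $S_H$ is upward closed, using the monotonicity of the best response from Lemma~\ref{lem:facts}(2): if $x_1 \in S_H$ and $x_2 \geq x_1$, then $\br(x_2, H) \geq \br(x_1, H) \geq h$. Combined with the fact that $h \in S_H$ (since $\br(h, H) \geq h$ by Lemma~\ref{lem:facts}(1) and $h \in H$), this shows that $R_H$ is well-defined, $R_H \leq h$, and $S_H$ is an interval of the form $[R_H, \infty) \cap \mathcal{X}$ or $(R_H, \infty) \cap \mathcal{X}$. The key step is to pin down which case occurs by showing that when $R_H < h$, we have $R_H \notin S_H$: if $\br(R_H, H) \geq h$, then Lemma~\ref{lem:bestreponse-charac} forces $\br(R_H, H) = h_j$ for some $h_j \in H$ with $h_j \geq h$, and Lemma~\ref{lem:interval} forces $R_H \in (\alpha_j, h_j]$ for some $\alpha_j < R_H$; by the openness of the left endpoint, any $x$ slightly below $R_H$ but still above $\alpha_j$ satisfies $\br(x, H) = h_j \geq h$ as well, contradicting $R_H = \inf S_H$.

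Given this structural description, the rest is a direct case split using $h \geq f$, which guarantees the symmetric difference between $S_H$ and $\{x \geq f\}$ lies entirely in the ``gap'' between $R_H$ and $f$. In the case $R_H < f$, every $x < R_H$ is truly negative and classified negative (correct), every $x \geq f$ satisfies $x > R_H$ hence is in $S_H$ and classified positive (correct), and the misclassified agents are exactly the negatives who still pass $h$, occupying $(R_H, f)$. In the case $R_H \geq f$, every $x > R_H$ satisfies $x > f$ and is classified positive (correct), while agents in $[f, R_H]$ are truly positive but not in $S_H$ (using $R_H \notin S_H$ from the previous step) and therefore classified negative (misclassified). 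Plugging these intervals into $U(H) = 1 - \Pr_{x \sim D}[x \text{ misclassified}]$ yields the case-split formula stated in the lemma.

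The main obstacle is the boundary analysis at $x = R_H$: the exact formula stated depends on whether the infimum is attained, so the nontrivial argument is precisely the proof that $R_H \notin S_H$ whenever $R_H < h$, which is where the left-open structure of the manipulation intervals from Lemma~\ref{lem:interval} is essential. Once this endpoint behavior is settled, the combination of monotonicity and the single-sided threshold form of $h$ and $f$ makes the remaining case split routine.
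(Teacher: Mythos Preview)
Your proposal is correct and follows essentially the same approach as the paper: both reduce the utility computation to the claim that $h(\br(x,H)) = 1$ iff $x > R_H$, i.e.\ $S_H = (R_H,\infty)\cap\mathcal{X}$, and then read off the symmetric difference with $\{x \ge f\}$. Your version is in fact more careful than the paper's own terse proof: the paper simply asserts ``if $\br(x,H)\ge h$ then $x > R_H$ by the definition of $R_H$'' without justifying the strict inequality, whereas you explicitly handle the boundary point by invoking Lemmas~\ref{lem:bestreponse-charac} and~\ref{lem:interval} to show $R_H\notin S_H$.
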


\begin{proof}[Proof of Lemma~\ref{lem:learnersutility}]
    Recall that
    $
    U (H) = \Pr_{x \sim D} \left[ h(\br(x,H)) = f(x)\right]
    $.
    The claim follows from the fact that $h(\br(x,H)) = 1$ if and only if $x > R_H$. Note that if $x > R_H$, then $\br(x,H) \ge h$ (equivalently, $h(\br(x,H)) = 1$) by the definition of $R_H$ and Lemma~\ref{lem:facts}. Further, if $\br(x,H) \ge h$ then $x > R_H$ by the definition of $R_H$.
\end{proof}

Given such characterization of the learner's utility, we will show that when the agents' prior is uniform over $\{ h_1, \ldots, h_n \}$, there are only \emph{polynomially many} possible values that $R_H$ can take, even though \emph{there are exponentially many $H$'s}. Our algorithm then for any possible value $R$ of $R_H$, finds a subset $H$ such that $R_H = R$, if such $H$ exists. The algorithm then outputs the $H$ with maximal utility according to Equation~\ref{eq:utility}. More formally, we consider the following partitioning of the space of subsets of $\{ h_1, \ldots, h_n \}$. For any $\ell \in \{ 1, 2, \ldots, n \}$, and for any $i \in \{ k, k+1, \ldots, n\}$\footnote{Recall $k$ is the index of $h$ in $\{ h_1, \ldots, h_n\}$, i.e., $h= h_k$.}, define
\[
S_{i,\ell} = \left\{ H \subseteq \left\{ h_1, \ldots, h_n \right\}: h \in H, \, | H | = \ell, \, \br (h, H) = h_i \right\}
\]
Note that by Lemma~\ref{lem:bestreponse-charac}, $\br (h \equiv h_k, H) \in \{ h_i : i \ge k \}$ for any $H$. Therefore, $\{S_{i,\ell}\}_{i,\ell}$ gives us a proper partitioning of the space of subsets, which implies
\[
\min_{H \subseteq \left\{ h_1, \ldots, h_n \right\}, h \in H} U (H) = \min_{i, \ell} \min_{H \in S_{i,\ell}} U (H)
\]
We will show that when the prior is uniform, solving $\min_{H \in S_{i,\ell}} U(H)$ can be done efficiently, by showing a construction of the optimal $H \in  S_{i,\ell}$ in our algorithm. To do so, we first show that $R_H$ (defined in Equation~\ref{eq:R_H}) can be characterized by $h_i$, when we restrict ourselves to $H \in S_{i,\ell}$.

\begin{restatable}{lemma}{hi}\label{lem:h_i}
   Fix any prior $\pi$ over $\{ h_1, \ldots, h_n \}$. If $H \in S_{i,\ell}$, then $\left\{ x: \br ( x, H) = h_i \right\} = (R_H, h_i]$.
\end{restatable}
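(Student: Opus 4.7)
The plan is to combine Lemma~\ref{lem:interval} with the tie-breaking convention of Remark~\ref{remark:tie-breaking} and the monotonicity of best response from Lemma~\ref{lem:facts} in order to identify $R_H$ with the left endpoint of the manipulation interval yielded by Lemma~\ref{lem:interval}. Concretely, I want to show that if the interval $(\alpha, h_i]$ given by Lemma~\ref{lem:interval} is the set of agents that best respond to $h_i$ under $H$, then $\alpha$ is precisely $R_H$, the infimum of the set of agents whose best response is at least $h$.

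First, since $H \in S_{i,\ell}$ we have $\br(h, H) = h_i$, so $\{x : \br(x, H) = h_i\}$ is nonempty (it contains $h$). Applying Lemma~\ref{lem:interval} yields $\{x : \br(x, H) = h_i\} = (\alpha, h_i]$ for some $\alpha$ satisfying $u_\alpha(\alpha, H) = u_\alpha(h_i, H)$, and in particular $\alpha < h \le h_i$ since $h \in (\alpha, h_i]$. Next I would argue $\br(\alpha, H) = \alpha$: taking $x \downarrow \alpha$ from within $(\alpha, h_i]$, where the best response equals $h_i$, and using continuity of $u_x(x', H) = \Pr_{h' \sim \pi\vert_H}[h'(x') = 1] - |x - x'|$ in $x$, we see that $h_i$ remains a maximizer of $u_\alpha(\cdot, H)$ over $x' \ge \alpha$. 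By the indifference condition from Lemma~\ref{lem:interval}, $\alpha$ is also a maximizer of $u_\alpha(\cdot, H)$, so the tie-breaking rule (lowest cost among utility-maximizers) picks $\br(\alpha, H) = \alpha$, since $c(\alpha, \alpha) = 0$.

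Finally, to conclude $R_H = \alpha$: for every $x \in (\alpha, h_i]$, $\br(x, H) = h_i \ge h$ (using $i \ge k$ which holds since $h_i = \br(h, H) \ge h$ by Lemma~\ref{lem:facts}), so $R_H \le \alpha$. Conversely, for every $x \le \alpha$, the monotonicity in Lemma~\ref{lem:facts} gives $\br(x, H) \le \br(\alpha, H) = \alpha < h$, so such $x$ does not lie in $\{x : \br(x, H) \ge h\}$, yielding $R_H \ge \alpha$. Hence $R_H = \alpha$ and the stated identity $\{x : \br(x, H) = h_i\} = (R_H, h_i]$ follows immediately.

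The main obstacle I expect is the second step, pinning down $\br(\alpha, H) = \alpha$ rather than some other classifier $h_j \in H$ with $\alpha < h_j \le h_i$: this is where one must carefully invoke both continuity of $u_\alpha(\cdot, H)$ (to upgrade the indifference between $\alpha$ and $h_i$ given by Lemma~\ref{lem:interval} into both being \emph{global} utility-maximizers at $\alpha$) and the lowest-cost tie-breaking rule (to select $\alpha$ over $h_i$ at the indifference point, using $c(\alpha, \alpha) = 0$). Once this boundary behavior is correctly handled, the rest reduces to straightforward monotonicity bookkeeping already provided by Lemma~\ref{lem:facts}.
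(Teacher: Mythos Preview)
Your proposal is correct and uses the same two ingredients as the paper (Lemma~\ref{lem:interval} and Lemma~\ref{lem:facts}), just assembled in a slightly different order. The paper first sets $Q_H \triangleq \inf\{x:\br(x,H)=h_i\}$, gets $R_H\le Q_H$ by inclusion, and rules out $R_H<Q_H$ via a one-line contradiction with part~3 of Lemma~\ref{lem:facts}; only then does it invoke Lemma~\ref{lem:interval}. You instead invoke Lemma~\ref{lem:interval} first to obtain $\alpha$, then prove $\br(\alpha,H)=\alpha$, then sandwich $R_H$ using monotonicity. Your middle step---showing $\br(\alpha,H)=\alpha$---is in fact already established inside the paper's proof of Lemma~\ref{lem:interval} (though not in its statement), so you could simply cite it rather than rederive it via the continuity/tie-breaking argument; either way the argument is sound, and the paper's route is only marginally shorter because it sidesteps this boundary analysis entirely.
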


In particular, this Lemma implies that for $H \in S_{i,\ell}$, we have $R_H = \inf \left\{ x: \br ( x, H) = h_i \right\}$. Next, we demonstrate the possible values that $R_H$ can take for uniform priors. In particular, the following lemma establishes that $R_H$ can take only polynomially many values.
\begin{restatable}{lemma}{RH}\label{lem:R_H}
    If the prior $\pi$ is uniform over $\{h_1, \ldots, h_n\}$, then for any $H \in S_{i,l}$,
    $
    R_H = h_i - j/\ell$ where $j = \left| \left\{ h' \in H:  h' \in (R_H, h_i] \right\} \right|
    $.
\end{restatable}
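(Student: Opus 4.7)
The plan is to combine the indifference characterization of the boundary point $R_H$ from Lemma~\ref{lem:interval} (applied via Lemma~\ref{lem:h_i}) with the fact that under a uniform prior, the posterior $\pi \vert_H$ is uniform over $H$, so probabilities of positive classification reduce to simple counting ratios with denominator $\ell = |H|$.

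First, I would fix $H \in S_{i,\ell}$. By Lemma~\ref{lem:h_i}, $\{x : \br(x,H) = h_i\} = (R_H, h_i]$, so $R_H = \inf\{x : \br(x,H) = h_i\}$ plays the role of $\alpha$ in Lemma~\ref{lem:interval}. That lemma then yields the indifference condition
\begin{equation*}
u_{R_H}(R_H, H) \;=\; u_{R_H}(h_i, H).
\end{equation*}
Next, since $\pi$ is uniform over $\{h_1,\ldots,h_n\}$ and $|H| = \ell$, the posterior $\pi \vert_H$ places mass $1/\ell$ on each classifier in $H$. Therefore, for any point $x'$,
\begin{equation*}
\Pr_{h' \sim \pi \vert_H}[h'(x') = 1] \;=\; \Pr_{h' \sim \pi \vert_H}[h' \le x'] \;=\; \frac{|\{h' \in H : h' \le x'\}|}{\ell}.
\end{equation*}

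Plugging $x' = R_H$ and $x' = h_i$ into the indifference equation, and noting that $c(R_H, R_H) = 0$ and $c(R_H, h_i) = h_i - R_H$ (valid because $h_i \ge R_H$), I get
\begin{equation*}
\frac{|\{h' \in H : h' \le R_H\}|}{\ell} \;=\; \frac{|\{h' \in H : h' \le h_i\}|}{\ell} \;-\; (h_i - R_H).
\end{equation*}
Rearranging and recognizing that $|\{h' \in H : h' \le h_i\}| - |\{h' \in H : h' \le R_H\}| = |\{h' \in H : h' \in (R_H, h_i]\}| = j$, I obtain $h_i - R_H = j/\ell$, i.e., $R_H = h_i - j/\ell$, as desired.

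The only subtle point to double-check is that the indifference is really attained at $x = R_H$ itself (and not merely as a limit), which is exactly the content of Lemma~\ref{lem:interval}; once this is in hand, the rest is a one-line counting argument, so I do not anticipate a real obstacle. A secondary sanity check worth including is that $R_H \ge 0$ and that $j \ge 1$ (the latter because $h_i \in H$ is itself counted in $j$, since $H \in S_{i,\ell}$ forces $\br(h, H) = h_i$, hence $h_i \in H$).
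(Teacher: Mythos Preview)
Your proposal is correct and follows essentially the same approach as the paper: invoke Lemma~\ref{lem:h_i} to identify $R_H$ with the $\alpha$ of Lemma~\ref{lem:interval}, use the resulting indifference equation $u_{R_H}(R_H,H)=u_{R_H}(h_i,H)$, and then exploit uniformity of $\pi\vert_H$ to turn the two probabilities into counts over $\ell$, whose difference is exactly $j$. Your write-up is in fact slightly cleaner (correct signs throughout, and the sanity check that $h_i\in H$ so $j\ge 1$ is a nice touch), but the underlying argument is the same.
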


\begin{algorithm}[t]
    \SetAlgoNoLine
    \KwIn{ground truth classifier $f$, adopted classifier $h \ge f$, prior's support $\{h_1, \ldots, h_n\}$ where $h_1 \le \ldots \le h_n$ and $h_k = h$, data distribution $D$}
    \For{$i = k, k+1, \ldots, n$}{
        \For{$\ell = 1, 2, \ldots, n$}{
            \For{$j = 1, 2, \ldots, \ell$}{
                $R \gets h_i - j/\ell$ \tcp*{candidate value $R$ for $R_H$.}
                $S_1 \gets \left\{ h' \in \{h_1, \ldots, h_n\}: R < h' \le h_i \right\}$ \tcp*{all classifiers between $R$ and $h_i$.}
                $S_2 \gets \left\{ h' \in \{h_1, \ldots, h_n\}: h' \le R \right\}$ \tcp*{all classifiers smaller than $R$.}
                $S_3 \gets \left\{ h' \in \{h_1, \ldots, h_n\}: h' > h_i \right\}$ \tcp*{all classifiers larger than $h_i$.}
                \If{$R \ge h$ or $\left|  S_1 \right| < j$}{
                    $H_j^{i,\ell} \gets \perp$ \tcp*{no $H$ exists for $(i,\ell,j)$.}
                }\Else{
                    $H_j^{i,\ell} \gets \{h, h_i\}$\;
                    $H_j^{i,\ell} \gets H_j^{i,\ell} \cup \text{MAX}_{j-|H_j^{i,\ell}|} \left(S_1 \setminus H_j^{i,\ell} \right)$ \tcp*{$\text{MAX}_{m} (\cdot) \triangleq $ $m$ largest elements}
                    \If{$|S_2| \ge \ell - j$}{
                        $T \gets \text{any subset of size $\ell - j$ from $S_2$}$
                    }\Else{
                        $T \gets S_2 \cup \text{MAX}_{\ell-j - |S_2|} \left( S_3 \right)$ \tcp*{$\text{MAX}_{m} (\cdot) \triangleq $ $m$ largest elements}}
               
                    $H_j^{i,\ell} \gets H_j^{i,\ell} \cup T$\;
                    \If{$\br(h_i, H_j^{i,\ell}) > h_i$}{
                        $H_j^{i,\ell} \gets \perp$ \tcp*{no $H$ exists for $(i,\ell,j)$.}
                    }\Else{
                        \If{$\inf \left\{ x: \br ( x, H_j^{i,\ell}) = h_i \right\} > R$}{
                            $H_j^{i,\ell} \gets \perp$ \tcp*{no $H$ exists for $(i,\ell,j)$.}}
                          }
                    }
                
                \If{$H_j^{i,\ell} = \perp$}{
                    $U_j^{i, \ell} \gets -\infty$\;}
                \Else {
                    \If{$R < f$}{
                        $U_j^{i,\ell} \gets 1 - \Pr_{x \sim D} \left[ R < x <  f \right]$ \tcp*{computing utility according to Equation~\ref{eq:utility}.}}
                    
                    \If{$R \ge f$}{
                        $U_j^{i,\ell} \gets 1 - \Pr_{x \sim D} \left[ f \le x \le R \right]$ \tcp*{computing utility according to Equation~\ref{eq:utility}.}}
            	
                    }
                
            }
        }
   }
    \KwOut{$H^\star = H_{j^\star}^{i^\star, \ell^\star}$ where $(i^\star, \ell^\star, j^\star) \in \argmax_{(i,\ell,j)} U_j^{i, \ell}$.}
   
\caption{The Learner's Optimization Problem: Discrete Uniform Prior}
\label{alg:firm}
\end{algorithm}

Given such characterization, Algorithm~\ref{alg:firm}, for any $i, \ell$, enumerates over all possible $j \in \{1, \ldots, \ell\}$ and returns a $H$ such that $H \in S_{i, \ell}$ and $R_H = h_i - j / \ell$, if such $H$ exists. To elaborate, for any $i,\ell,j$, such $H \equiv H_{j}^{i,\ell}$ is constructed by first picking the $j$ largest classifiers that are between $h_i - j/\ell$ and $h_i$ (including both $h_i$ and $h$). If there are not at least $j$ classifiers between $h_i - j/\ell$ and $h_i$, then no such $H$ exists for $i,\ell,j$ because of Lemma~\ref{lem:R_H}. After picking the first $j$ elements as described, the remaining $\ell - j$ classifiers are first chosen from all classifiers that are less than (or equal to) $h_i - j/\ell$, and once these classifiers are exhausted, the rest are taken from the classifiers that are larger than $h_i$, starting from the largest possible classifier, and going down until $\ell$ classifiers are picked.

Note that this construction of $H \equiv H_{j}^{i,\ell}$ guarantees that $\br(h_i, H')$ is minimized among all $H'$'s with corresponding values of $(i,\ell,j)$. Therefore, if $\br(h_i, H) > h_i$, it is guaranteed that no $H$ exists for $(i,\ell,j)$. If $\br(h_i, H) = h_i$, then the construction of $H \equiv H_{j}^{i,\ell}$ guarantees that $R_H = \inf \left\{ x: \br ( x, H) = h_i \right\}$ is as small as possible. Therefore, if $\inf \left\{ x: \br ( x, H) = h_i \right\} > h_i - j /\ell$, then it is guaranteed that no such $H$ exists for $(i,\ell,j)$ (note that $\inf \left\{ x: \br ( x, H) = h_i \right\} \ge h_i - j /\ell$ by construction). The algorithm finally outputs, among all $H$'s found, the subset $H$ with maximum utility according to Equation~\ref{eq:utility}.

This proves the following theorem.

\begin{theorem}
    There exists an algorithm (Algorithm~\ref{alg:firm}) that for any $n$, any uniform prior over $\{h_1, \ldots, h_n\}$ that includes $h$, and any data distribution $D$, returns $H^\star \subseteq \{h_1, \ldots, h_n\}$ in time $O(n^3)$ such that $h \in H^\star$, and that $U (H^\star) = \max_{H \subseteq \{h_1, \ldots, h_n\}, h \in H} U (H)$.
\end{theorem}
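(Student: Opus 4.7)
The strategy is to establish correctness via three reductions and then bound the running time by counting candidate tuples. First, Lemma~\ref{lem:learnersutility} collapses the learner's objective to a scalar-valued quantity: $U(H)$ depends on $H$ only through $R_H$, via the piecewise expression in Equation~\eqref{eq:utility}. Hence the optimization reduces to finding the feasible value of $R_H$ that maximizes $U$. Second, Lemma~\ref{lem:bestreponse-charac} applied at $x = h$ implies that every feasible $H$ lies in some $S_{i,\ell}$ with $i \in \{k,\dots,n\}$ and $\ell \in \{1,\dots,n\}$, and Lemma~\ref{lem:R_H} further constrains $R_H = h_i - j/\ell$ for some integer $j \in \{1,\dots,\ell\}$ under a uniform prior. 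Consequently, there are only $O(n^3)$ candidate values of $R_H$, indexed by triples $(i,\ell,j)$, and it suffices to check, for each triple, whether some $H \in S_{i,\ell}$ realizes it.

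\textbf{Correctness of the per-triple construction.} Fix $(i,\ell,j)$ and let $R := h_i - j/\ell$. The central claim to prove is that the greedy set $H_j^{i,\ell}$ constructed by the algorithm is a ``best-case'' representative: if any $H \in S_{i,\ell}$ with $R_H = R$ exists, then $H_j^{i,\ell}$ is itself such a witness, and otherwise the algorithm correctly returns $\perp$. To establish this, I would prove two monotonicity facts under the uniform posterior. (i) Among all $H \in S_{i,\ell}$ with exactly $j$ classifiers in $(R, h_i]$, replacing any ``inside'' classifier by a larger element of $S_1 \cap (R, h_i]$ can only weakly decrease $\br(h_i, H)$ and $\inf\{x : \br(x,H) = h_i\}$. (ii) Moving an ``outside'' fill-in classifier farther away from $(R, h_i]$ (i.e.\ preferring elements of $S_2$ over elements of $S_3$, and within $S_3$ preferring the largest available thresholds) weakly decreases the same two quantities. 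Both statements reduce to direct expansion of the quasi-linear utility at a point just below $h_i$ under the uniform posterior, combined with Lemma~\ref{lem:interval} to localize the argmax. These monotonicity facts imply that $H_j^{i,\ell}$ minimizes both $\br(h_i, \cdot)$ and $\inf\{x : \br(x,\cdot) = h_i\}$ within the class of candidates with parameters $(i,\ell,j)$. Thus the two explicit feasibility checks in the algorithm --- $\br(h_i, H_j^{i,\ell}) \le h_i$ and $\inf\{x : \br(x, H_j^{i,\ell}) = h_i\} \le R$ --- are sound: failure on the greedy construction certifies infeasibility of the whole class.

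\textbf{Running time.} After an initial $O(n \log n)$ sort of the thresholds and $O(n)$ precomputation of prefix counts so that $|S_1|, |S_2|, |S_3|$ and their extreme elements can be accessed in $O(1)$, each of the $O(n^3)$ triples performs only a constant number of $O(1)$-amortized operations: forming $H_j^{i,\ell}$ from sorted segments, evaluating $\br(h_i, H_j^{i,\ell})$ via Lemma~\ref{lem:bestreponse-charac} (which restricts the argmax to $O(\ell)$ candidates and can be maintained incrementally across $j$), computing $\inf\{x : \br(x, H_j^{i,\ell}) = h_i\}$ from Lemma~\ref{lem:interval}, and reading off $U_j^{i,\ell}$ from Equation~\eqref{eq:utility} using precomputed CDF values of $D$. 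Summing over triples yields an $O(n^3)$ total runtime.

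\textbf{Anticipated main obstacle.} The delicate step is the optimality of the greedy construction in Step~3, i.e.\ verifying that the specific tie-breaking in the algorithm (largest-first inside $(R, h_i]$; $S_2$ before $S_3$ outside; largest-first within $S_3$) actually minimizes both $\br(h_i, H_j^{i,\ell})$ and $\inf\{x : \br(x, H_j^{i,\ell}) = h_i\}$ simultaneously. A single exchange argument under the uniform posterior should suffice, but care is needed because these two quantities could in principle be optimized by different configurations; showing that the same greedy choice optimizes both is the crux of the argument and relies on the fact that, for single-sided thresholds with unit cost and a uniform posterior, both quantities are determined by the same cumulative-mass expression in a neighborhood of $h_i$.
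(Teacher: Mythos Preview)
Your proposal is correct and follows essentially the same route as the paper: reduce $U(H)$ to the scalar $R_H$ via Lemma~\ref{lem:learnersutility}, partition feasible sets into the cells $S_{i,\ell}$, invoke Lemma~\ref{lem:R_H} to restrict $R_H$ to the $O(n^3)$ candidates $h_i - j/\ell$, and argue that the greedy set $H_j^{i,\ell}$ is a sound feasibility witness for each triple via monotonicity/exchange. You are somewhat more explicit than the paper about the exchange argument and the amortized $O(1)$ per-triple cost (the paper simply asserts both), and one small point you should make explicit to close the loop is that the greedy construction also guarantees $\inf\{x:\br(x,H_j^{i,\ell})=h_i\}\ge R$ (since any $h''\in H_j^{i,\ell}\cap(x,R]$ satisfies $h_i-h''\ge j/\ell$ and hence weakly beats $h_i$), so that passing the check yields $R_{H_j^{i,\ell}}=R$ exactly rather than merely $\le R$.
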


\subsection{Minimizing False Positive (Negative) Rates for Arbitrary Priors}
While so far we worked with \emph{accuracy} as the utility function of the learner, in this section, we consider other natural performance metrics and provide insights on the optimal information release for the proposed utility functions, without restricting ourselves to uniform priors. In particular, we consider \textit{False Negative Rate} (FNR) and \textit{False Positive Rate} (FPR) which are crucial metrics to consider in  hiring, loan approvals, and many other applications. FNR represents the proportion of actual positive cases that are incorrectly predicted as negative, while FPR represents the proportion of actual negative cases that are incorrectly predicted as positive by the classifier. We formally define these new utility functions for the learner below. For any $H \subseteq \mathcal{H}$ such that $h \in H$,
\begin{equation}
U_{FPR} (H) \triangleq 1 - FPR (H) \triangleq 1 - \Pr_{x\sim D} \left[h(BR(x,H))=1|f(x)=0 \right]
\end{equation}
\begin{equation}
U_{FNR} (H) \triangleq 1 - FNR (H) \triangleq 1 - \Pr_{x\sim D}\left[ h(BR(x,H))=0|f(x)=1 \right]
\end{equation}
Throughout this section, instead of maximizing the utilities $U_{FPR}$ or $U_{FNR}$, we use the equivalent phrasing of minimizing $FPR$ or $FNR$, respectively. We note that we implicitly assume that $\Pr_{x\sim D}[f(x)=1]>0$ and $\Pr_{x\sim D}[f(x)=0]>0$ --- otherwise, the learner can simply achieve perfect utility by rejecting or accepting all agents, respectively.

In the following theorem, we establish that for any given prior $\pi$ over a set $\{ h_1, h_2, \ldots, h_n\} \subseteq \mathcal H$, if the learner aims to minimize the FPR, \emph{no-information-release} is preferable to \emph{full-information-release}.\footnote{We note that in this section, while we work with discrete priors over some $\{h_1, \ldots, h_n\} \subseteq \mathcal H$, our results can be easily extended to \emph{any} prior.} Additionally, we show that for minimizing the FNR, an optimal strategy for the learner is \emph{full-information-release}. By ``no-information-release'', we mean releasing any subset $H$ such that $H$ includes the support of the prior $\pi$: $H \supseteq \{ h_1, \ldots, h_n \}$ which results in $\pi |_H = \pi$; one such $H$ could be the entire class $H = \mathcal H$. By ``full-information-release'', we mean revealing the classifier: $H = \{ h \}$.

\begin{theorem}\label{thm:fpr-fnr}
Fix any $h \ge f$. For any prior $\pi$ over $\{h_1, \ldots, h_n\}$ that includes $h$, we have
\begin{enumerate}
    \item $FPR(\mathcal H)\leq FPR(\{h\})$.
    \item $FNR(\{h\})\leq FNR(H)$ for every $H\subseteq \mathcal H$ such that $h \in H$.
\end{enumerate}
\end{theorem}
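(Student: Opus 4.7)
My plan is to reduce both parts to a single monotonicity lemma: for any $H \subseteq \mathcal{H}$ with $h \in H$,
\[
\{ x \in \mathcal{X} : \br(x, H) \ge h \} \ \subseteq \ \{ x \in \mathcal{X} : \br(x, \{h\}) \ge h \}.
\]
Intuitively, revealing only $\{h\}$ maximally concentrates the posterior on $h$, giving every agent the strongest possible incentive to target $h$; enlarging $H$ can only dilute the posterior mass on $h$ and weaken that incentive, so the set of agents that end up being classified positive by $h$ can only shrink. Once this lemma is established, part (1) follows by intersecting both sides with $\{x : x < f\}$ (negatives) and normalizing by $\Pr_{x \sim D}[f(x)=0]$, and part (2) follows by taking complements within $\{x : x \ge f\}$ (positives) and normalizing by $\Pr_{x \sim D}[f(x)=1]$.

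The core of the lemma is a universal bound $\br(x, H) - x \le 1$, valid for any prior, any released $H$, and any $x$. Since the agent may always stay at $x$ with nonnegative utility, $u_x(\br(x, H), H) \ge u_x(x, H) \ge 0$; combined with $u_x(x', H) \le 1 - (x' - x)$ (posterior probabilities are bounded by $1$), this forces $\br(x, H) - x \le 1$. Hence $\br(x, H) \ge h$ immediately implies $x \ge h - 1$.

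The main obstacle I anticipate is the boundary case $x = h - 1$, where I need to upgrade this to a strict inequality to guarantee that the agent strictly prefers moving to $h$ under full information. A direct calculation shows that for every $y \ge h$,
\[
u_{h-1}(y, H) - u_{h-1}(h-1, H) \ = \ \Pr_{h' \sim \pi \vert_H}\!\left[h' \in (h-1, y]\right] - (y - (h-1)) \ \le \ 1 - 1 \ = \ 0,
\]
so no $y \ge h$ strictly improves upon staying at $h-1$; by the tie-breaking rule of Remark~\ref{remark:tie-breaking} (select the smallest maximizer), $\br(h-1, H) < h$. Therefore $\br(x, H) \ge h$ actually forces $x > h - 1$, and under $H=\{h\}$ every such $x$ satisfies $\br(x, \{h\}) \ge h$ (either $x \ge h$ and the agent stays, or $x \in (h-1, h)$ and moving to $h$ strictly dominates staying, yielding utility $1 - (h-x) > 0$). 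This closes the lemma and both conclusions of the theorem.
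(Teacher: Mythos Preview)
Your proof is correct and follows essentially the same idea as the paper's: both hinge on the fact that $\{x:\br(x,\{h\})\ge h\}=(h-1,\infty)\cap\mathcal{X}$ and that $\br(x,H)\ge h$ forces $x>h-1$ for any $H$. Your packaging is somewhat cleaner---you isolate a single containment lemma $\{x:\br(x,H)\ge h\}\subseteq\{x:\br(x,\{h\})\ge h\}$ (valid for \emph{every} $H\ni h$, not only $H=\mathcal{H}$) and derive both parts from it, whereas the paper argues the FPR and FNR cases separately with parallel but distinct utility-comparison chains; the underlying computations, including the use of the tie-breaking rule at the boundary, are the same.
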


\begin{proof}[Proof of Theorem~\ref{thm:fpr-fnr}]
We begin by showing that $FPR(\mathcal H)\leq FPR(\{h\})$. Let $x\in \mathcal{X}$ be such that $f(x)=0$ and $h(BR(x,\mathcal H))=1$. We will show that $h(BR(x,\{h\}))=1$.

Lemma~\ref{lem:bestreponse-charac} together with $h\geq f$ imply the existence of $h_{j}$ such that $BR(x,\mathcal H)=h_{j}>x$ (as $f(x)\ne h(\br(x,\mathcal{H}))$). This further indicates that when $\mathcal{H}$ is released, the utility of the agent is strictly higher when it manipulates to $h_j$, compared to not moving:
\[
    u_x (h_j, \mathcal{H}) = \sum_{i=1}^{j} \pi (h_{i})-(h_{j}-x)> \sum_{i: h_i \le x } \pi (h_{i}) = u_x (x, \mathcal{H})
\]
Note that $h(BR(x,\mathcal H))=1$ and $BR(x,\mathcal H) = h_j$ implies that $h_{j}\geq h$, and therefore:
\[
    u_x (h, \{h\}) = 1-(h-x)\geq \sum_{i=1}^{j} \pi (h_{i})-(h_{j}-x)> \sum_{i: h_i \le x} \pi (h_{k}) = u_x(x, \{h\})
\]
Since Lemma~\ref{lem:bestreponse-charac}   implies that $BR(x,\{h\})\in \{x,h\}$, we derive from the above inequality that $h(BR(x,\{h\}))=1$. This proves the first part of the theorem. 

Next, we show that $FNR(\{h\})\leq FNR(H)$ for every $H\subseteq \mathcal H$. 
Let $x\in \mathcal{X}$ be such that $f(x)=1$ and $h(BR(x,\{h\}))=0$, and let $H$ be any subset of $\mathcal{H}$. We will  show that $h(BR(x, H))=0$.

Lemma~\ref{lem:bestreponse-charac} implies that $BR(x,\{h\})\in \{x,h\}$. Together with  $h(BR(x,\{h\}))=0$, we derive that $BR(x,\{h\})=x$, and thus:
\[
    u_x (h, \{h\}) = 1-(h-x) \leq u_x (x, \{h\}) = 0
\]
Now, for every $h_j$ such that $h_j\geq h$, we have:
\[    u_x (h_j, H) = \sum_{i=1}^{j} \pi|_H(h_{i})-(h_{j}-x) \leq 1-(h-x) \leq 0
    \leq \sum_{i: h_i \le x } \pi |_H(h_{i}) = u_x (x, H).
\]
As a result, when the learner releases $H$, the utility of agent $x$ from remaining at $x$ is greater than (or equal to) any manipulation $h_j$ such that $h_j\geq h$. This implies that $h(BR(x,H))=0$.
\end{proof}

We finish this section by showing that minimizing FPR, unlike minimizing FNR, does not always have a clear optimal solution, by providing three examples with very different optimal solutions.

\begin{example}[Full-information-release is optimal for FPR] Fix any $B > 1$ and any $0 \le t < B -1$. Let $D$ be the uniform distribution over $\mathcal{X} = [0,B]$, and $f(x) = \mathds{1} \left[ x \ge t \right] $. Let $\mathcal H$ be the class of single-sided threshold classifiers and suppose the adopted classifier $h(x) = \mathds{1} \left[ x \ge t+1 \right]$. Under any prior over $\mathcal H$, one can show that the full information release of $H = \{h\}$ achieves perfect FPR for this setting: $FPR(\{h\}) = 0$.
\end{example}

\begin{example}[No-information-release is optimal for FPR] Under the same setup as in Example~\ref{ex:partailCanbeBetter2}, one can show that releasing the support of the prior $H = \{ h_{1.8}, h_2 \}$ achieves $FPR(H) = 0$, whereas full information release of the adopted classifier $h=h_2$ achieves $FPR(\{ h \}) = 0.9/1.9 \approx 0.47$. Note that $H = \{ h_{1.8}, h_2\}$ is the support of the prior, so it constitutes as no-information-release. In other words, we have $FPR(H') = FPR(H) = 0$ for every $H'$ such that $H\subseteq H' \subseteq \mathcal H$.
\end{example}


\begin{claim}\label{claim:fpr}
    There exists an instance in which neither full-information-release nor no-information-release are optimal when the utility function of the learner is $U_{FPR}$.\footnote{We remark that the claim holds when the utility function is $U$ (as defined in Equation~\ref{eq:utilityPartial}) as well.}
\end{claim}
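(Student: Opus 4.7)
The plan is to exhibit a three-classifier example in which a particular partial release strictly beats both extremes. The mechanism I want to exploit is that removing a classifier $h_1 < h$ from $H$ simultaneously (i) increases the posterior mass at $h$ through renormalization by a factor $1/(1-\pi(h_1))$, but (ii) removes the contribution that $h_1$ itself makes to the acceptance probability at $x'=h$; if the prior is calibrated so (ii) strictly dominates (i), the perceived payoff of manipulating up to $h$ drops and fewer unqualified agents game.

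The instance will be $\mathcal{X}=[0,10]$ with $D$ uniform, $f(x)=\mathds{1}[x\ge 5]$, $h(x)=\mathds{1}[x\ge 5.5]$, and a prior $\pi$ supported on thresholds $\{h_1, h, h_3\}=\{5, 5.5, 6\}$ with masses $(0.05, 0.6, 0.35)$. I will compare three releases: $H_1=\{h\}$ (full information), $H_2=\{h_1,h,h_3\}$ (no information), and the targeted partial release $H_3=\{h,h_3\}$. In each case Lemma~\ref{lem:bestreponse-charac} restricts an agent's best response to staying or moving to some $h_i\in H$ with $h_i>x$, reducing the analysis to comparing at most three utilities for each $x$.

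A routine calculation then shows that under $H_1$ the agent at $x$ manipulates to $h$ iff $x>4.5$; under $H_2$ moving to $h$ dominates moving to $h_1$ and to $h_3$ and is profitable iff $x>4.85$; and under $H_3$ moving to $h$ dominates moving to $h_3$ by a margin of $5/38>0$ and is profitable iff $x>5.5-12/19=185/38$. Intersecting each gaming interval with the unqualified region $x<5$ and dividing by the total unqualified mass $5$ yields $\mathrm{FPR}(H_1)=0.1$, $\mathrm{FPR}(H_2)=0.03$, and $\mathrm{FPR}(H_3)=1/38\approx 0.0263$, which proves the claim since $H_3$ is neither $\{h\}$ nor a superset of the prior's support.

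The only delicate step is the calibration: the mass on $h$ must be large enough that moving to $h$ unambiguously dominates moving to $h_1$ or $h_3$ in all three scenarios (so the best-response comparison is clean), and the mass on $h_1$ must be small enough that renormalizing after removing $h_1$ does not overcorrect at $x'=h$. The parenthetical extension to the accuracy utility $U$ uses the same instance, since one can verify that all qualified agents end up at some $x'\ge h$ under each of the three releases, so the strict ordering of FPRs translates verbatim into a strict ordering of $U$-values.
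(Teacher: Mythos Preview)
Your construction is correct: the three FPR values $0.1$, $0.03$, and $1/38$ check out, and since all qualified agents reach $h$ under each release the ordering carries over to the accuracy utility $U$ as you claim. So the proposal does prove the statement.

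Your route differs from the paper's in an interesting way. The paper works on a two-point domain $\{0,0.4\}$ with $f=0.3$, $h=h_2=0.5$, thresholds $(0.1,0.5,0.7)$ and prior $(0.2,0.1,0.7)$, and its winning partial release is $\{h_1,h_2\}$---it \emph{keeps} the low threshold $h_1$ and \emph{drops} the high one $h_3$. The mechanism there is a ``lure'': after renormalization $h_1$ carries mass $2/3$, so the unique negative agent prefers to stop at $h_1<h$ and is correctly rejected, driving FPR from $1$ all the way to $0$. Your construction does the opposite: you \emph{drop} the low threshold $h_1$ and \emph{keep} $h_3$, and your mechanism is purely the renormalization arithmetic you describe---removing $h_1$ subtracts $\pi(h_1)$ from the acceptance probability at $x'=h$ while only multiplying $\pi(h)$ by $1/(1-\pi(h_1))$, and with $\pi(h_1)=0.05$ the subtraction wins. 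The paper's example is cleaner (discrete, and the gap is $1$ versus $0$), but yours lives in the continuous test-score setup of Section~\ref{subsec:setup} and isolates a different and somewhat subtler reason why partial release can help; either construction suffices for the existence claim.
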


\begin{proof}[Proof of Claim~\ref{claim:fpr}] 
We construct such an instance as follows. Suppose the domain is $\mathcal X=\{x_1,x_2\}$ with  $x_1=0$, $x_2=0.4$ and the distribution $D$ is given by $D(x_1)=D(x_2)= 0.5$. 
In addition, consider $f=0.3$, and hypothesis class $\mathcal{H}=\{h_1,h_2,h_3\}$ where $h_1=0.1,h_2=0.5,h_3=0.7$, and a prior distribution $\pi$ such that $\pi(h_1)=0.2,\pi(h_2)=0.1,\pi(h_3)=0.7$. 

Observe that under full-information-release, $FPR( \{h\})=1$ for every $h\in \mathcal{H}$. Now suppose $h=h_2$ is the adopted classifier. 
We have that 
    $BR(x_1, \{h\})=0.5$ implying $h(BR(x_1, \{h\}))=1\ne f(x_1) = 0$ implying $x_1$ is a false positive under $\{h\}$ release. Additionally, $BR(x_1, \mathcal{H})=0.7$ implies that $h(BR(x_1, \mathcal{H}))=1\ne f(x_1) = 0$ implying $x_1$ is a false positive under $\mathcal{H}$ release.
    Further, it holds that $BR(x_1, \{h_1,h_2\})=0.1$ and so $h(BR(x_1, \{h_1, h_2 \}))=0=f(x_1)$. Moreover, in this particular instance, releasing $\{h_1,h_2\}$ achieves perfect utility as $BR(x_2, \{h_1,h_2\})=0.5$ which implies $h(BR(x_2, \{h_1, h_2\}))=1=f(x_2)$.
\end{proof}

\section*{Conclusion}
We initiate the study of strategic classification with partial knowledge (of the agents) and partial information release (of the learner). Our model relaxes the often unrealistic assumption that agents fully know the learner's deployed classifier. Instead, we model agents as having a distributional prior on which classifier the learner is using. Our results show the existence of previously unknown intriguing informational middle grounds; they also demonstrate the value of revisiting the fundamental modeling assumptions of strategic classification in order to provide effective recommendations to practitioners in high-stakes, real-world prediction tasks. 

A critical line for future research is how our partial information release model can be used to achieve \emph{fairness} in strategic classification. Prior work studying fairness in the standard strategic classification setting \citep{hu2019disparate, socialcost18} consider population groups that have \emph{differing cost functions}. In our model of strategic classification, a critical aspect that requires further investigation is designing fairness-aware information release when different population groups not only have differing cost functions but they can also have \emph{differing prior distributions}: network homophily, social disparities, and stratification will cause population groups to have different priors.
Can the learner maintain strategic accuracy while reducing disparities by using \emph{fairness-aware} partial information release to agents?
We believe our model can help shed light on this critical research problem.

\section*{Acknowledgements}

The authors thank Avrim Blum for helpful discussions in the early stages of this work.

Lee Cohen is supported by   the Simons Foundation Collaboration on the Theory of Algorithmic Fairness, the Sloan Foundation Grant 2020-13941, and the Simons Foundation investigators award 689988.
Kevin Stangl was supported in part by the National Science Foundation under grants CCF-2212968 and ECCS-2216899, by the Simons Foundation under the Simons Collaboration on the Theory of Algorithmic Fairness, and by the Defense Advanced Research Projects Agency under cooperative agreement HR00112020003. The views expressed in this work do not necessarily reflect the position or the policy of the Government and no official endorsement should be inferred.

\bibliographystyle{ACM-Reference-Format}
\bibliography{main_arxiv}

\newpage
\appendix
\section{Missing Proofs of Section~\ref{sec:agentbr}}\label{app:c}

\submodular*

\begin{proof}[Proof of Claim~\ref{clm:submodular}]
We will abuse notation and use $h_i$ for the threshold $t_i$ ($h_i \equiv t_i \in \reals$).

Fix any $x$. Consider $S \subseteq S' \subseteq \{h_1, \ldots, h_n \}$ and $h' \in \reals$ such that $h' \notin S'$. Note that
\[
c(x,S) = \max \left( \max (S) - x, 0 \right), \quad c \left(x, S\cup\{h'\} \right) = \max \left( \max (S\cup\{h'\}) - x, 0 \right)
\]
\[
c(x,S') = \max \left( \max (S') - x, 0 \right), \quad c \left(x, S'\cup\{h'\} \right) = \max \left( \max (S'\cup\{h'\}) - x, 0 \right)
\]
where $\max (F)$ is simply the largest threshold in $F$, for any set $F$. Note that $\max (S) \le \max (S')$ because $S \subseteq S'$. Suppose $x \le \max (S)$. We have three cases
\begin{enumerate}
    \item If $h' \ge \max (S')$, then
    \[
    c \left(x, S\cup\{h'\} \right) - c \left(x, S \right) = h' - \max(S) \ge h' - \max(S') = c \left(x, S'\cup\{h'\} \right) - c \left(x, S' \right)
    \]
    \item If $\max(S) \le h' \le \max(S')$, then
    \[
    c \left(x, S\cup\{h'\} \right) - c \left(x, S \right) = h' - \max(S) \ge 0 = c \left(x, S'\cup\{h'\} \right) - c \left(x, S' \right)
    \]
    \item If $h' \le \max(S)$, then
    \[
    c \left(x, S\cup\{h'\} \right) - c \left(x, S \right) = c \left(x, S'\cup\{h'\} \right) - c \left(x, S' \right) = 0
    \]
\end{enumerate}
    So we have shown that the cost function is submodular if $x \le \max(S)$. We can similarly, using a case analysis, show that the cost function is submodular when $x > \max(S)$.
\end{proof}

\section{Missing Proofs of Section~\ref{subsec:setup}}\label{app:a}

\facts*

\begin{proof}[Proof of Lemma~\ref{lem:facts}]
    Fix any $x$, and any $H$. We have
    \begin{align*}
    \br(x, H) &= \argmax_{x'} \left\{ \Pr_{h' \sim \pi \vert_H} \left[x' \ge h' \right] - | x' - x| \right\}
    \\ &= \argmax_{x' \ge x} \left\{ \Pr_{h' \sim \pi \vert_H} \left[x' \ge h' \right] - ( x' - x) \right\}
    \\ &= \argmax_{x' \ge x} \left\{ \Pr_{h' \sim \pi \vert_H} \left[x' \ge h' \right] -  x' \right\}
    \\ & = \argmax_{x' \ge x} g_H (x')
    \end{align*}
    where we take $g_H (x') \triangleq \Pr_{h' \sim P \vert_H} \left[x' \ge h' \right] -  x'$. The first equality follows because agents don't gain any utility by moving to a point $x' < x$, and that tie-breaking is in favor of lowest cost solution.
    
    The first and the second part of the lemma follows from this derivation. For the third part, we have that
    \begin{align*}
        \br (x_1 , H) &= \argmax_{x' \ge x_1} g_H (x')
        = \argmax_{x' \ge x_2} g_H (x')
        = \br (x_2 , H)
    \end{align*}
    where the second equality follows because $\br (x_1 , H) \ge x_2$.
\end{proof}

\section{Missing Proofs of Section~\ref{subsec:unif-prior}}\label{app:b}

\bestreponsecharac*
\begin{proof}[Proof of Lemma~\ref{lem:bestreponse-charac}]
    Recall from Lemma~\ref{lem:facts} that $\br (x,H) \ge x$. Note that the utility of the agent $x\in \mathcal{X}$ from manipulating to a point $x'\geq x$ can be expressed as 
    \[
    u_x (x', H) = \sum_{i: h_i \le x'} \pi \vert_H(h_{i})-(x'-x)
    \]
    For any $x' \ge x$ such $x' \notin \{x\}\cup \{h_i \in H :h_i>x\}$, it is easy to see that the agent can increase her utility by moving to a point in $\{x\}\cup \{h_i \in H :h_i>x\}$, which proves the lemma.
\end{proof}

\interval*
\begin{proof}[Proof of Lemma~\ref{lem:interval}]
    Let $\alpha = \inf \left\{ x: \br ( x, H) = h_i \right\}$. Take any $x \in (\alpha, h_i]$. We have, by the definition of $\alpha$, that there exists $x' \in (\alpha, x)$ such that $x' \in \left\{ x: \br ( x, H) = h_i \right\}$, implying $\br ( x', H) = h_i$. Therefore, $\br ( x', H) \ge x$. The third part of Lemma~\ref{lem:facts} implies that $h_i = \br ( x', H) = \br ( x, H)$. This proves that
    \[
    (\alpha, h_i] \subseteq \left\{ x: \br ( x, H) = h_i \right\}
    \]
    
    If $x > h_i$, then $\br(x,H) > h_i$ by the first part of Lemma~\ref{lem:facts}. Therefore $x \notin \left\{ x: \br ( x, H) = h_i \right\}$.
    
    If $x < \alpha$, then $\br(x,H) < h_i$ by the definition of $\alpha$, implying $x \notin \left\{ x: \br ( x, H) = h_i \right\}$.
    
    If $x=\alpha$, we will show that $\br(x, H) = \alpha$. Note that $\alpha \le \br(\alpha, H) \le h_i$ by Lemma~\ref{lem:facts}. But if $\br(\alpha, H) > \alpha$, then by the third part of Lemma~\ref{lem:facts}, $\br(\alpha, H) = h_i$. So $\br(\alpha, H) \in \{ \alpha, h_i \}$. Suppose $\br(\alpha, H) = h_i$. Therefore, there exists $\epsilon > 0$ such that $u_\alpha (\alpha, H) + \epsilon < u_\alpha (h_i, H)$, by the definition of agents' best response and the fact that tie-breaking is in favor of smaller values (Remark~\ref{remark:tie-breaking}). Let $\epsilon' \in (0, \epsilon/2]$ be such that $\{ h' \in H: \alpha - \epsilon' \le h' < \alpha\} = \emptyset$. Consider $x' = \alpha - \epsilon'$. We have, by Lemma~\ref{lem:facts} and ~\ref{lem:bestreponse-charac}, that $\br(x', H) \in \{x' \} \cup [\alpha, h_i]$. But because $\br(\alpha, H) = h_i$, Lemma~\ref{lem:facts} implies that $\br(x', H) \in \{x', h_i\}$. Note that
    \[
    u_{x'} (x', H) \le u_\alpha (\alpha, H) < u_\alpha (h_i, H) - \epsilon = u_{x'} (h_i, H) - ( \epsilon - \epsilon')
    \]
    implying that $\br(x', H) = h_i$. But $x' = \alpha - \epsilon'$ and this contradicts with the definition of $\alpha$. Therefore $\br(\alpha, H) = \alpha$, and this completes the proof of the first part of the lemma.
    
    We now focus on the second part of the lemma. Note that $u_\alpha (\alpha, H) \ge u_\alpha (h_i, H)$, because if $u_\alpha (\alpha, H) < u_\alpha (h_i, H)$, then $\alpha < \br(\alpha, H) \le h_i$. Together with the first part of this lemma, and Lemma~\ref{lem:facts}, this implies that $\br(\alpha, H) = h_i$ which is a contradiction with the first part of the lemma. Next, we show that $u_{\alpha} (\alpha, H) \le u_{\alpha} (h_i, H)$. Suppose $u_{\alpha} (\alpha, H) > u_{\alpha} (h_i, H) + \epsilon$ for some $\epsilon > 0$. Consider $x = \alpha + \epsilon / 2$. We have that
    \[
    u_x (x, H) \ge u_{\alpha} (\alpha, H) > u_{\alpha} (h_i, H) + \epsilon = u_{x} (h_i, H) + \epsilon/2
    \]
    implying that $\br(x,H) \neq h_i$. This is in contradiction with the first part of the lemma. Therefore, $u_\alpha (\alpha, H) = u_\alpha (h_i, H)$.
\end{proof}

\hi*
\begin{proof}[Proof of Lemma~\ref{lem:h_i}]
    We first show that $R_H = \inf \left\{ x: \br ( x, H) = h_i \right\}$. Fix any $H \in S_{i,\ell}$. Let $Q_H = \inf \left\{ x: \br ( x, H) = h_i \right\}$. First note that because $H \in S_{i,\ell}$, we have $\br (h, H) = h_i$, and therefore $\left\{ x: \br ( x, H) = h_i \right\} \neq \emptyset$, and that $Q_H \le h$. Additionally, because $$\left\{ x: \br ( x, H) = h_i \right\} \subseteq \left\{ x: \br ( x, H) \ge h \right\}$$
    we have that $Q_H \ge R_H$. So we have $R_H \le Q_H \le h$. If $Q_H \neq R_H$, then there exists $R_H < x < Q_H$, such that $h \le \br (x, H) < h_i$. But, for $x' = \br (x, H)$, we have $\br (x', H) = h_i > x' = \br (x, H)$. This is in contradiction with the third part of Lemma~\ref{lem:facts} (taking $x_1 = x$, and $x_2 = x'$). Therefore, $Q_H = R_H$, and this proves the first part of the lemma. The second part of the lemma is followed from part one and Lemma~\ref{lem:interval}.


    

\end{proof}

\RH*
\begin{proof}[Proof of Lemma~\ref{lem:R_H}]
    Fix any $H \in S_{i,\ell}$. Note that Lemma~\ref{lem:h_i} and Lemma~\ref{lem:interval} together imply that $u_{R_H} (R_H, H) = u_{R_H} (h_i, H)$. This implies
    \[
    \Pr_{h' \sim \pi \vert_H} \left[R_H \ge h' \right] = \Pr_{h' \sim \pi \vert_H} \left[h_i \ge h' \right] + (h_i - R_H)
    \]
    But $\Pr_{h' \sim \pi \vert_H} \left[R_H \ge h' \right] = j_1/\ell$ and $\Pr_{h' \sim \pi \vert_H} \left[R_H \ge h' \right] = j_2/\ell$ where $j_1$ and $j_2$ are the number of hypotheses in $H$ that are smaller (or equal to) $R_H$, and smaller (or equal to) $h_i$, respectively. In other words,
    \[
    j_1 = \left| \left\{ h' \in H: h' \le R_H \right\} \right|, \quad j_2= \left| \left\{ h' \in H: h' \le h_i \right\} \right|
    \]
    Therefore,
    \[
    R_H = h_i - \frac{j_2-j_1}{\ell}
    \]
    which completes the proof.
    
\end{proof}

\end{document}